\def\blfootnote{\gdef\@thefnmark{}\@footnotetext}
\theoremstyle{definition}
\newtheorem*{definition*}{Definition}
\def\eqref#1{equation~\ref{#1}}
\def\1{\bm{1}}
\DeclareMathAlphabet{\mathsfit}{\encodingdefault}{\sfdefault}{m}{sl}
\SetMathAlphabet{\mathsfit}{bold}{\encodingdefault}{\sfdefault}{bx}{n}
\newcommand{\R}{\mathbb{R}}
\definecolor{amaranth}{rgb}{0.9, 0.17, 0.31}
\definecolor{green}{HTML}{549D54}
\renewcommand\section{\@startsection{section}{1}{\z@}%
                                    {-2.5ex \@plus -1ex \@minus -.2ex} %
                                    {1.5ex \@plus.2ex} %
                                    {\normalfont\Large\bfseries}}
\renewcommand\subsection{\@startsection{subsection}{2}{\z@}%
                                    {-1.5ex \@plus -1ex \@minus -.2ex} %
                                    {1ex \@plus .2ex} %
                                    {\normalfont\large\bfseries}}
\newenvironment{minimalbox}%
  {\begin{mdframed}[linewidth=0.5pt, skipabove=6pt, skipbelow=6pt, leftmargin=0pt, rightmargin=0pt, innertopmargin=2pt, innerbottommargin=2pt]}%
  {\end{mdframed}}
\newcommand{\AND}{\texttt{\textcolor{Maroon}{AND}}\ }
\newcommand{\OR}{\texttt{ \textcolor{BlueViolet}{OR}}\ }
\newcommand{\ALL}{\texttt{\textcolor{PineGreen}{ALL}}}
\newcommand{\INPUT}{\texttt{\textcolor{Sepia}{INPUT:}}\ }
\newcommand{\OUTPUT}{\texttt{\textcolor{Sepia}{OUTPUT:}}\ }
\title{Learning to Add, Multiply, and Execute Algorithmic Instructions Exactly with Neural Networks}
\author{
    Artur Back de Luca$^*$\qquad
    George Giapitzakis$^*$\\
    Kimon Fountoulakis\\
    \vspace{-1mm}\\
    \normalsize{University of Waterloo}\\
    \vspace{-3mm}\\
    \normalsize{\texttt{\{\href{mailto:abackdel@uwaterloo.ca}{abackdel}, \href{mailto:ggiapitz@uwaterloo.ca}{ggiapitz}, \href{mailto:kimon.fountoulakis@uwaterloo.ca}{kimon.fountoulakis}\}@uwaterloo.ca}}\\
}
\date{}
\theoremstyle{plain}
\newtheorem{theorem}{Theorem}[section]
\newtheorem{lemma}{Lemma}[section]
\theoremstyle{definition}
\theoremstyle{plain}
\newtheorem{remark}{Remark}[section]
\newcommand{\RR}{\ensuremath{\mathbb{R}}}
\newcommand{\vect}[1]{\boldsymbol{#1}}
\begin{document}
\maketitle

\def\thefootnote{*}
\footnotetext{Equal contribution.}
\def\thefootnote{\arabic{footnote}}

\vspace{-5mm}

\begin{abstract}
Neural networks are known for their ability to approximate smooth functions, yet they fail to generalize perfectly to unseen inputs when trained on discrete operations. Such operations lie at the heart of algorithmic tasks such as arithmetic, which is often used as a test bed for algorithmic execution in neural networks. In this work, we ask: can neural networks learn to execute binary-encoded algorithmic instructions exactly? We use the Neural Tangent Kernel (NTK) framework to study the training dynamics of two-layer fully connected networks in the infinite-width limit and show how a sufficiently large ensemble of such models can be trained to execute exactly, with high probability, four fundamental tasks: binary permutations, binary addition, binary multiplication, and Subtract and Branch if Negative (SBN) instructions. Since SBN is Turing-complete, our framework extends to computable functions. We show how this can be efficiently achieved using only logarithmically many training data. Our approach relies on two techniques: structuring the training data to isolate bit-level rules, and controlling correlations in the NTK regime to align model predictions with the target algorithmic executions.

\end{abstract}

\doparttoc %
\faketableofcontents %

\parttoc %

\section{Introduction}
There has been growing interest in the computational capabilities and efficiency of neural networks both from a theoretical and empirical perspective \cite{giannou23a, DBLP:journals/corr/KaiserS15, mcleish2024transformers, perez2021attention, saxton2018analysing, siegelman95comp}. Most works have either demonstrated the expressive power of different architectures through simulation results or learnability from a probably approximately correct (PAC) viewpoint. While important, the mere existence of parameter configurations that realize a specific computation or a generalization bound does not offer insight into the ability to learn to execute an algorithm through gradient-based training. In fact, simulating algorithmic instructions with neural networks often involves approximating discontinuous functions that gradient descent is difficult to converge to with standard training datasets \citep{pmlr-v235-back-de-luca24a}.

By modeling training with the Neural Tangent Kernel (NTK), we prove that two-layer fully connected networks in the infinite-width limit can learn to iteratively execute \emph{binary permutations}, \emph{binary addition}, \emph{binary multiplication}, and \emph{SBN instructions} with a logarithmic number of examples, or equivalently, a number of examples polynomial in the input bit size. Rather than training on traditional input-output pairs, we exploit the locality of these algorithms by casting each step as a set of templates. We show that training with these local templates is sufficient for full algorithmic execution when composed across iterations of a loop. To our knowledge, this is the first NTK-based proof of exact learnability for these tasks. A high-level overview of our approach is shown in \Cref{fig:main_diagram}.

\textbf{Contributions.} Our approach is built on two key innovations that overcome the interference and ambiguity in training data that can usually create problems for neural learning for discrete tasks:

\begin{enumerate}
    \item \textbf{Algorithmic template representation:} We demonstrate how to design training data that represent local computations (i.e., operating only on a subset of bits) that can be composed to execute complete algorithms. 
    Each algorithmic instruction is represented by ``templates'' and entire algorithms (e.g., binary permutations, binary addition, binary multiplication, and SBN instruction execution) can be executed by iteratively matching these templates. The total number of templates is logarithmic in the number of all possible inputs.
    \item \textbf{Provable exact learnability:}
    We prove that, by training on an orthonormalized version of our templates, we can control unwanted correlations in the NTK regime and show that an ensemble of two-layer fully connected networks in the infinite-width limit can learn to execute algorithmic instructions exactly with high probability.
\end{enumerate}

\begin{figure}
    \centering
    \includegraphics[width=\linewidth]{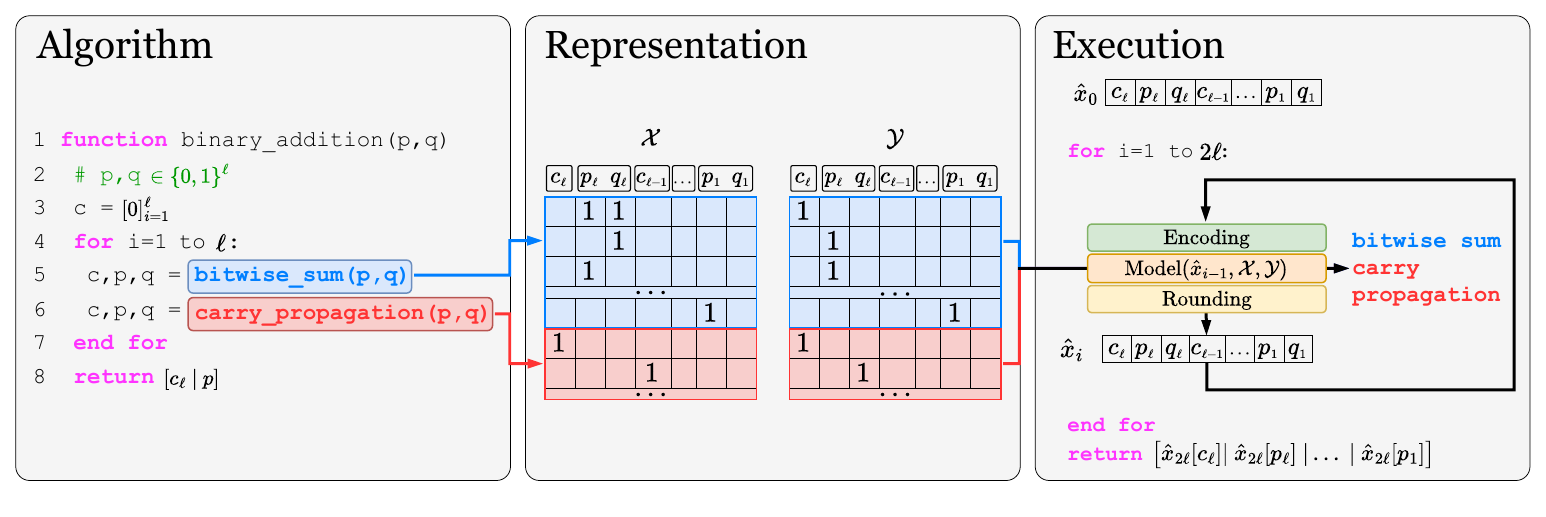}
    \caption{
    Simplified illustration of the framework used in our analysis. The left panel shows an example algorithm (binary addition) where each function, highlighted in blue and red, is translated into binary training instructions shown in the central panel with matching colors.
    Each instruction specifies a condition over part of the current algorithm state and maps it to a corresponding output. Instructions are grouped into blocks, indicated by boxed column labels in $\mathcal{X}$ and $\mathcal{Y}$, each representing a subset of the input state. 
    For binary addition, some blocks represent segments of the summands, while others reflect the carry state. 
    For the applications discussed in \Cref{sec:behavior}, this block structure allows the number of instructions in $\mathcal{X}$ and $\mathcal{Y}$ to scale linearly with bit length $\ell$. 
    The right-most panel shows how instructions are used within an iterative framework to update the state vector $\hat{\vect{x}}_i$, which serves as input to the neural network at the $i$-th step. 
    The state is first encoded, as described in \Cref{sec:ntk_learnability}, before being passed to the model. In the NTK regime, we show that the network performs template matching against training samples to execute the appropriate instructions. As $\hat{\vect{x}}_i$ evolves, it activates new templates, progressing through the algorithm. Predictions are rounded at each step to mitigate noise, and repeating this process reproduces the algorithm’s full execution.}
    \label{fig:main_diagram}
\end{figure}
\section{Literature review}

The NTK framework \citep{NEURIPS2018_5a4be1fa} provides insight into continuous-time gradient descent (gradient flow) in fully connected feed-forward neural networks. It has been extended to discrete gradient descent \citep{NEURIPS2019_0d1a9651} and generalized to other architectures like recurrent neural networks (RNNs) and Transformers \cite{yang2020tensorprogramsiineural, yang2021tensorprogramsiibarchitectural}. Although NTK theory has been widely developed, few works offer task-specific guarantees. Notable examples include \cite{boix-adsera2024when}, which proves that Transformers can generalize to unseen symbols for a class of pattern-matching language tasks. Our work addresses a different setting: we show that shallow feed-forward neural networks in the infinite-width limit trained by gradient descent can learn to exactly execute algorithmic instructions using a logarithmic number of examples.

Complementing NTK-based results, other studies have demonstrated the expressive power of neural architectures by simulating algorithmic tasks. Following this approach, \cite{siegelman95comp} prove that RNNs are Turing-complete, and \cite{hertrich2023provably} demonstrate that RNNs can solve the shortest path problem and approximate solutions to the knapsack problem. 
Similarly, simulation results on Transformers also establish Turing completeness \citep{perez2021attention} and present constructive solutions that generalize across input lengths for arithmetic tasks \cite{cho2025arithmetic}, linear algebra \citep{giannou23a, yang2023looped}, graph-related problems \citep{pmlr-v235-back-de-luca24a}, and parallel computation \citep{deluca2025positionalattentionexpressivitylearnability}. From a learnability perspective, \cite{wei2022statistically} provides statistical guarantees for learning Turing-computable functions, while \cite{malach2023auto} shows that predictors trained auto-regressively can approximate any such functions. However, these approaches either depend on hand-crafted parameter configurations without training or build upon the PAC framework, and therefore do not address the core question of exact learnability explored in this work.

Learning to execute algorithms has been the focus of numerous empirical studies. 
These works explore architectural modifications and prompting techniques, particularly aimed at improving generalization in arithmetic tasks \cite{deng2024implicit, deng2025from, jelassi2023lengthgeneralizationarithmetictransformers,mcleish2024transformers,mistry2022primer,neelakantan2016neural,nogueira2021investigating,power2022grokking,reed2016neural,saxton2018analysing,wang2017deep,wei2022chain,zaremba2014learning}. 
In this context, \cite{DBLP:journals/corr/KaiserS15} introduced the Neural GPU, which learns binary addition and multiplication and generalizes to sequences longer than those seen in training. 
Similarly, \cite{NEURIPS2018_0e64a7b0} proposed the Neural Arithmetic Logic Unit (NALU), which incorporates arithmetic operations into network modules to improve generalization. This approach was further refined in \cite{Madsen2020Neural} with the introduction of the Neural Addition Unit (NAU) and Neural Multiplication Unit (NMU), offering better stability and convergence.
More recent work aims to enhance the length generalization capabilities of Transformers through improved Positional Encodings \cite{jelassi2023lengthgeneralizationarithmetictransformers, ruoss-etal-2023-randomized, shen2023positionaldescriptionmatterstransformers, zhou2024transformers}, the use of scratchpads \cite{nye2022show}, and prompting techniques for large language models (LLMs), such as Chain-of-Thought (CoT) programming \cite{deng2024implicit, deng2025from}. These empirical efforts provide practical methods to improve both in-distribution and out-of-distribution performance of neural networks on arithmetic tasks.
However, they lack formal guarantees regarding the conditions under which generalization occurs. Our theoretical analysis offers precise sufficient criteria under which a neural network in the infinite-width limit can provably learn to execute algorithmic instructions.

\section{Notation and preliminaries}
\label{sec:prelim}

Throughout the text, we use boldface to denote vectors. The symbols $\vect{1}$ and $\vect{0}$ denote the vector of all ones and zeros of appropriate length, respectively. We use the notation $[n]$ to refer to the set $\{1,2,\dots,n\}$. For a vector $\vect{x}\in \RR^n$ we denote by $\|\vect{x}\| := \sqrt{\sum_{i=1}^n {x}_i^2}$ the Euclidean norm of $\vect{x}$. We denote the $n\times n$ identity matrix by $I_n$.

\textbf{Model and NTK Results.} We provide an overview of the theory used to derive our results.  We refer the reader to \cite{golikov2022neuraltangentkernelsurvey} for a comprehensive treatment of the NTK theory.
We work with two-layer, ReLU-activated fully connected feed-forward neural networks with no bias. Concretely, the architecture is defined as the function $F: \RR^{k'} \to \RR^{k}$ with $
    F(\vect{x}) = W^2 \operatorname{ReLU}(W^1 \vect{x})
$
where $W^1 \in \RR^{n_h \times {k'}}$, $W_2 \in \RR^{n_h \times k}$, and $n_h \in \mathbb{N}$ is the hidden dimension. The weights are initialized according to the NTK parametrization as $
W^1_{ij} = \frac{\sigma_\omega}{\sqrt{k'}} \omega^1_{ij}$ and $W^2_{ij} = \frac{\sigma_\omega}{\sqrt{n_h}} \omega^2_{ij}$
where $\omega^1_{ij}$ and $\omega^2_{ij}$ are trainable parameters initialized i.i.d. from a standard Gaussian distribution. When $n_h \to \infty$, the empirical NTK kernel given by $\nabla_{\{W^1, W^2\}} F(\vect{x})^\top \nabla_{\{W^1,W^2\}}F(\vect{x'})$ converges to the deterministic limit:
\begin{equation}
\label{eq:ntk}
    \Theta(\vect{x}, \vect{x'}) = \left(\frac{\vect{x}^\top \vect{x'}}{2\pi k'} (\pi-\theta) +\frac{\|\vect{x}\|\cdot\|\vect{x'}\|}{2\pi k'} \left((\pi -\theta)\cos \theta + \sin \theta\right)\right)I_k \in \RR^{k \times k}
\end{equation}
and the NNGP kernel is given by 
\begin{equation}
\label{eq:nngp}
    \mathcal{K}(\vect{x}, \vect{x'}) = \left(\frac{\|\vect{x}\|\cdot\|\vect{x'}\|}{2\pi k'} \left((\pi -\theta)\cos \theta + \sin \theta\right)\right)I_k \in \RR^{k\times k},
\end{equation}
where 
$
    \theta = \arccos\left(\nicefrac{\vect{x}^\top \vect{x'}}{\|\vect{x}\|\cdot \|\vect{x'}\|}\right).
$
For a set of vectors $\mathcal{X}$, we will use the notation $\Theta(\mathcal{X}, \cdot)$, $\Theta(\cdot, \mathcal{X})$ and $\Theta(\mathcal{X}, \mathcal{X})$ to refer to the limit NTK calculated when the set $\{F(\vect{x}): \vect{x} \in \mathcal{X}\}$ is vectorized (the outputs are stacked vertically), and similarly for the NNGP kernel. Our learnability results rely on the following theorem by \cite{NEURIPS2019_0d1a9651}, adapted to our architecture:

\begin{theorem}[Theorem 2.2 from \citealt{NEURIPS2019_0d1a9651}]
\label{thm:output}
    Let $\mathcal{X}$ and $\mathcal{Y}$ be the training dataset (training inputs and ground truth labels, respectively). Assume that $\Theta:=\Theta(\mathcal{X},\mathcal{X})$ is positive definite. Suppose the network is trained with gradient descent (with small-enough step-size) or gradient flow to minimize the empirical MSE loss.\footnote{The empirical MSE loss is defined as $\mathcal{L}(\mathcal{D})=(2|\mathcal{D}|)^{-1} \sum_{(\vect{x},\vect{y})\in \mathcal{D}} \|f_t(\vect{x})-\vect{y}\|^2$.} Then, for every $\hat{\vect{x}} \in \RR^{k'}$ with $\|\hat{\vect{x}}\| \leq 1$, as $n_h\to \infty$, the output at training time $t$, $F_t(\hat{\vect{x}})$, converges in distribution to a Gaussian with mean and variance given by
    \begin{align}
        \mu(\hat{\vect{x}}) &= \Theta(\hat{\vect{x}}, \mathcal{X})\Theta^{-1}\mathcal{Y} \label{eq:mean_out} \\
        \Sigma(\hat{\vect{x}}) &= \mathcal{K}(\hat{\vect{x}},\hat{\vect{x}})+ \Theta(\hat{\vect{x}},\mathcal{X})\Theta^{-1}\mathcal{K}(\mathcal{X}, \mathcal{X}) \Theta^{-1} \Theta(\mathcal{X}, \hat{\vect{x}}) - (\Theta(\hat{\vect{x}}, \mathcal{X})\Theta^{-1}\mathcal{K}(\mathcal{X}, \hat{\vect{x}}) + h.c) \label{eq:var_out}
    \end{align}
    where $\mathcal{Y}$ in \Cref{eq:mean_out} denotes the vectorization of all vectors $\vect{y} \in \mathcal{Y}$, and ``$h.c.$'' is an abbreviation for the Hermitian conjugate.
\end{theorem}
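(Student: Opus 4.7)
The plan is to prove the theorem by the standard linearization argument for infinitely wide networks: in the limit $n\to\infty$ the network is effectively its first-order Taylor expansion around initialization, and this linear model admits a closed-form gradient-flow solution that inherits Gaussianity from the random initialization.

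First I would establish the two ``static'' infinite-width facts by induction on the layer index $l$, using the recursive relations for the kernels summarized in \cref{app:recursion}. At each layer the pre-activation $h^l_0(\vect{x})$ is a sum of $n_{l-1}$ i.i.d.\ contributions scaled by $\sigma_\omega/\sqrt{n_{l-1}}$, so a layerwise multivariate CLT gives: (i) at initialization, $\{h^l_0(\vect{x})\}_{\vect{x}}$ jointly converges to a centered Gaussian process with covariance $\mathcal{K}^l$, and (ii) the empirical NTK satisfies $\hat{\Theta}^l_0 \to \Theta^l$ in probability, where $\Theta^l$ is the deterministic limit kernel. Both statements extend to the concatenated vectors over $\mathcal{X}\cup\{\hat{\vect{x}}\}$.

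Next I would prove the \emph{kernel stability} result: uniformly on the relevant time interval, $\sup_t \|\hat{\Theta}_t - \hat{\Theta}_0\| \to 0$ as $n\to\infty$. This is the main technical obstacle. The strategy is to show that during training the parameters $\theta_t$ move only by an $O(n^{-1/2})$ amount in the appropriate operator norm, which combined with a uniform bound on the Hessian of $f_t$ in a neighborhood of $\theta_0$ implies that $\nabla_\theta f_t(\vect{x})$ stays close to $\nabla_\theta f_0(\vect{x})$. Positive definiteness of $\Theta$ is what keeps the dynamics contractive on the training residual, which in turn prevents parameters from drifting too far. The discrete gradient-descent case reduces to the flow case by a standard Euler-tracking argument provided the step size is sufficiently small relative to the spectrum of $\hat{\Theta}_t$.

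Once the kernel is locked in as the constant $\Theta$, the network is effectively affine in $\theta - \theta_0$, so gradient flow on the MSE loss yields the linear ODE
\begin{equation*}
\dot{f}_t(\vect{x}) = -\Theta(\vect{x},\mathcal{X})\bigl(f_t(\mathcal{X}) - \mathcal{Y}\bigr)
\end{equation*}
valid for any $\vect{x}$. On the training set this integrates to $f_t(\mathcal{X}) - \mathcal{Y} = e^{-\Theta t}(f_0(\mathcal{X}) - \mathcal{Y})$, and substituting into the test-point equation and sending $t\to\infty$ gives
\begin{equation*}
f_\infty(\hat{\vect{x}}) = f_0(\hat{\vect{x}}) + \Theta(\hat{\vect{x}},\mathcal{X})\Theta^{-1}\bigl(\mathcal{Y} - f_0(\mathcal{X})\bigr).
\end{equation*}

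Finally, since $(f_0(\hat{\vect{x}}), f_0(\mathcal{X}))$ is jointly centered Gaussian with block covariance given by $\mathcal{K}$ at the appropriate pairs of arguments, $f_\infty(\hat{\vect{x}})$ is an affine transformation of a Gaussian and is therefore itself Gaussian. Taking the expectation kills $f_0$ and leaves $\mu(\hat{\vect{x}}) = \Theta(\hat{\vect{x}},\mathcal{X})\Theta^{-1}\mathcal{Y}$, matching \eqref{eq:mean_out}. Computing the covariance of the affine map produces one ``diagonal'' term $\mathcal{K}(\hat{\vect{x}},\hat{\vect{x}})$, one ``sandwich'' term $\Theta(\hat{\vect{x}},\mathcal{X})\Theta^{-1}\mathcal{K}(\mathcal{X},\mathcal{X})\Theta^{-1}\Theta(\mathcal{X},\hat{\vect{x}})$, and two cross terms $-\Theta(\hat{\vect{x}},\mathcal{X})\Theta^{-1}\mathcal{K}(\mathcal{X},\hat{\vect{x}})$ and its Hermitian conjugate, recovering exactly \eqref{eq:var_out}. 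The dominant technical burden is really the kernel-stability step; every other step is an essentially algebraic consequence of linearity and Gaussianity.
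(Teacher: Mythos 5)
This statement is not proved in the paper at all: it is quoted verbatim (in its $t\to\infty$ form) as Theorem 2.2 of \citet{NEURIPS2019_0d1a9651}, so there is no internal proof to compare against. Your sketch is, in outline, a faithful reconstruction of how that cited result is actually established: (i) layerwise CLT giving the NNGP limit $\mathcal{K}$ at initialization and convergence of the empirical NTK $\hat{\Theta}_0$ to the deterministic $\Theta$; (ii) ``lazy training'' kernel stability, i.e.\ $O(n^{-1/2})$ parameter movement plus local Lipschitzness of the Jacobian, with positive definiteness of $\Theta$ forcing exponential decay of the training residual; (iii) the linearized gradient-flow ODE with closed-form solution, and the discrete-GD case handled by a small-step-size condition tied to the spectrum of $\Theta$; (iv) Gaussianity of $f_\infty(\hat{\vect{x}}) = f_0(\hat{\vect{x}}) + \Theta(\hat{\vect{x}},\mathcal{X})\Theta^{-1}(\mathcal{Y} - f_0(\mathcal{X}))$ as an affine image of the jointly Gaussian $(f_0(\hat{\vect{x}}), f_0(\mathcal{X}))$, whose mean and covariance give exactly \eqref{eq:mean_out} and \eqref{eq:var_out}.

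Two caveats if you intended this as a complete proof rather than a roadmap. First, the genuinely hard content --- the uniform-in-time bound $\sup_t \|f_t - f_t^{\mathrm{lin}}\| \to 0$, the Jacobian local-Lipschitz lemma it rests on, and the justification for combining the width limit $n\to\infty$ with the training-time limit $t\to\infty$ (the statement here is the $t\to\infty$ specialization of Lee et al.'s time-dependent formulas, in which factors of $e^{-\Theta t}$ appear) --- is only named, not carried out; this is precisely where the cited paper spends its technical effort. Second, Gaussianity at any finite width does not hold for the linearized predictor (it involves products of independent initial weights through $\nabla_\theta f_0$), so you need the convergence of $\hat{\Theta}_0 \to \Theta$ and of $(f_0(\hat{\vect{x}}), f_0(\mathcal{X}))$ to its Gaussian limit to be invoked jointly before passing to the affine-map argument; your write-up implicitly does this but should say so explicitly. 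As a blind sketch of the external theorem the paper imports, it is on target.
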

\section{Algorithmic execution via template matching}
\label{sec:instructions}

In this section, we introduce a template matching principle that offers a high-level intuition for representing and executing algorithmic instructions in the NTK regime. This principle transforms binary inputs into binary outputs by comparing input configurations against a set of predefined templates. Matched templates are then used to compose the corresponding output. We use this principle of templates and template matching functions to encode and execute algorithms. While this approach may seem unrelated to neural networks, we show in the following sections that, by carefully structuring training and test inputs, neural networks can emulate this template matching mechanism. In doing so, they can learn algorithms on binary data through the lens of the Neural Tangent Kernel. More concretely, we show that this principle, as described here, allows us to learn and express arithmetic operations such as addition and multiplication. Furthermore, this approach can be generalized to more general computations, as discussed in \Cref{sec:ntk_learnability}.

\textbf{Input representation:} Inputs are vectors of binary variables of length $k$, i.e., $\hat{\vect{x}}\in \{0,1\}^k$. These variables are grouped into disjoint blocks, partitioning $\hat{\vect{x}}$ into $b$ blocks. Each block, denoted by $B_i$ for $i \in [b]$, has a length $s_i \in \mathbb{N}$. While block sizes may vary, each $s_i$ is assumed to be $\mathcal{O}(1)$.
Let $\hat{\vect{x}}[B_i] \in \{0,1\}^{s_i}$ denote the subvector for block $B_i$, i.e., the bits of $\hat{\vect{x}}$ that belong to that block.

\textbf{Templates and functions:} 
The transformation of the input vector depends on the configurations present within each block. Each block $B_i$ is associated with a finite set of templates $\mathcal{T}_i \subseteq \{0,1\}^{s_i} \times \{0,1\}^k$, where each template $(\vect{x}, \vect{y})$ maps a block configuration $\vect{x}$ to a complete output vector $\vect{y}$. The mapping is functional: for all $(\vect{x}, \vect{y}), (\vect{x}', \vect{y}') \in \mathcal{T}_i$, if $\vect{x} = \vect{x}'$, then $\vect{y} = \vect{y}'$. In other words, no block configuration maps to more than one output. Consequently, the cardinality of $\mathcal{T}_i$ is at most $2^{s_i}$

Using each $\mathcal{T}_i$, we define a block-specific pattern matching function $f_i: \{0,1\}^{s_i} \to \{0,1\}^k$ and an aggregation function $f: \{0,1\}^k \to \{0,1\}^k$ by:

\begin{minipage}{0.48\linewidth}
\begin{equation}
\label{eq:block-match}
f_i(\vect{x}^\prime) := 
\begin{cases}
    \vect{y} & \text{if } (\vect{x}^\prime, \vect{y}) \in \mathcal{T}_i \\
    \mathbf{0} & \text{otherwise,}
\end{cases}
\end{equation}
\end{minipage}
\hfill
\begin{minipage}{0.48\linewidth}
\begin{equation}
\label{eq:computation}
f(\hat{\vect{x}}) := \bigvee_{i=1}^b f_i(\hat{\vect{x}}[B_i])
\end{equation}
\end{minipage}

where $\mathbf{0}$ denotes the all-zero vector in $\{0,1\}^k$, and the bitwise disjunction (logical OR) is applied elementwise across the output vectors $f_i(\hat{\vect{x}}[B_i])$ inside \Cref{eq:computation}. Each of these template matching functions is applied independently and simultaneously to the corresponding block. 

In this framework, the block-specific templates $\mathcal{T}_i$ determine the local behavior of the algorithm, specifying how each block contributes to the global state vector $\hat{\vect{x}}$. The global update function $f$, formed by aggregating the output of all $f_i$, enforces this behavior across all blocks simultaneously. By applying $f$ iteratively, we propagate these local rules over time, effectively executing the algorithm.

\subsection{Algorithmic example: computing binary addition}
\label{sec:addition}

We now demonstrate how to apply the template matching principle to simulate binary addition. Throughout this example -- and the more formal algorithm descriptions provided in \Cref{app:algorithms} -- we often assign descriptive variable names to improve clarity. These identifiers serve only as labels and do not affect computation. For this example, we denote the two summands by $p$ and $q$, each consisting of $\ell = 2$ bits. Consequently, their sum requires at most $\ell + 1 = 3$ bits to be represented.

The addition algorithm emulates a ripple-carry adder built from half-adders \citep{harrisdigital2012}, performing bitwise addition, while propagating carries to higher-order bits. The process alternates between $\ell$ summation and $\ell$ carry-propagation steps, reaching a steady state after at most $2\ell$ iterations.
However, as demonstrated in \Cref{app:addition}, it is also possible to introduce a flag to indicate termination.

\begin{figure}
    \centering
    \includegraphics[width=\linewidth]{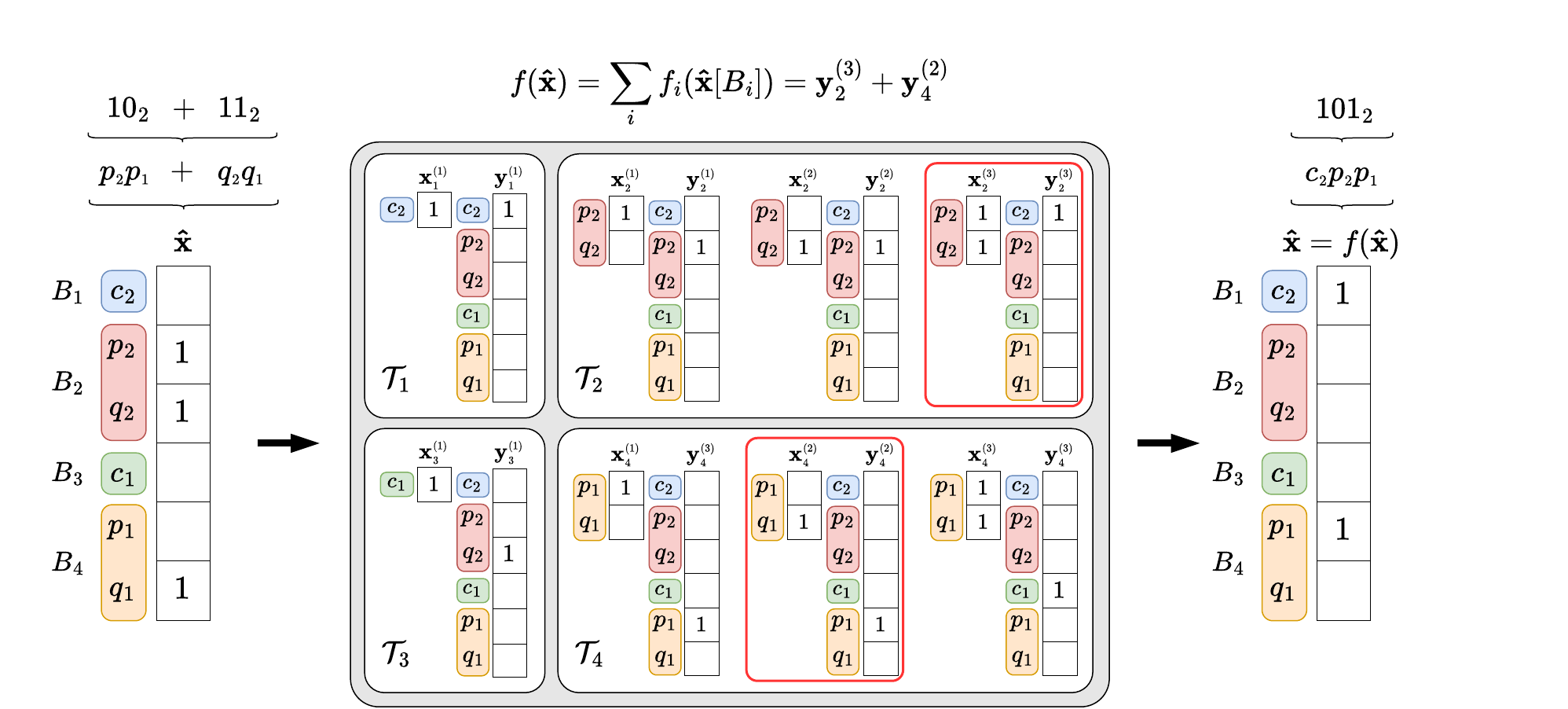}
    \caption{
    Illustration of the addition algorithm based on the template matching approach from \Cref{sec:instructions}. Two $\ell=2$ bit numbers, $p=2 \ (\textrm{or } 10_2)$ and $q=3 \ (\textrm{or } 11_2)$, are added by organizing their bits and carries into blocks $B_i$. Blocks $B_2$ and $B_4$ represent the bits of $p$ and $q$, while $B_1$ and $B_3$ handle the carries. The input $\hat{\vect{x}}$ is processed via template matching $f$, using templates $\mathcal{T}_i$, producing outputs $y^{(k)}_i$ used to compose the output. Although the method is iterative, this example completes in one step. The final result $5 \ (\textrm{or } 101_2)$ is stored at the most-significant carry bit and the bits of $p$ in $\hat{\vect{x}}$.}
    \label{fig:addition}
\end{figure}

To simulate this behavior, the input structure is organized into blocks corresponding to the bits of $p$ and $q$, along with their associated carry bits. In this example, we use four blocks: two for the individual bits $p_i$ and $q_i$, and two others for the carries, denoted $c_i$, for $i = 1, 2$. In our representation, the final output comprises the most significant carry bit, followed by the bits of $p$ as stored in $\hat{\vect{x}}$. As shown in \Cref{fig:addition}, the result is formed by concatenating $c_2$, $p_2$, and $p_1$, for a total of $\ell + 1 = 3$ output bits. To capture the required operations in each block, we define a set of templates, as illustrated in \Cref{fig:addition}. The even-numbered templates implement bitwise summation, while the odd-numbered templates handle carry propagation to the next bit. These templates are designed to ensure that the operations proceed without interfering with one another.

\section{Exact learnability Part I: NTK predictor behavior}
\label{sec:ntk_learnability}

In this section, we analyze the exact learnability of algorithmic executions in neural networks by studying the NTK predictor, defined as the mean of the limiting distribution for a two-layer network. We show that it preserves sign-based information about the ground truth and can learn to execute algorithms framed as template matching, following the framework in \Cref{sec:instructions}. The training dataset is built from templates and, in the applications presented, its size scales with the number of bits, hence logarithmic in the number of possible binary inputs.

\subsection{Input specification}
\label{subseq:input}

We begin our analysis by specifying the structure of the training and testing inputs.
Following the framework of \Cref{sec:instructions}, we construct the training inputs as
block-partitioned vectors, where each block corresponds to an input template set. Each training input is non-zero only within the block determined by its associated input template in $\mathcal{T}_i$. In contrast, test inputs may contain multiple non-zero entries, each corresponding to a configuration appearing in the dataset. A visualization of the input configuration is provided in \Cref{fig:encoding}.

\textbf{Training dataset}
We define the training set for any algorithmic task described as in \Cref{sec:instructions}. Let each of the $b$ block configurations $\mathcal{T}_i$ be a set of $t_i :=|\mathcal{T}_i|$ template-label tuples, i.e., each 
$$\mathcal{T}_i = \{(\vect{x}^{(i,1)},\vect{y}^{(i,1)}),\dots,(\vect{x}^{(i,t_i)},\vect{y}^{(i,t_i)})\}
\subseteq \{0,1\}^{s_i}\times\{0,1\}^k.$$ The input dimension to the neural network is $k' = 
\sum_{i=1}^{b} t_i$.\footnote{In applications such as binary multiplication and SBN, the dimension $k'$ is extended with auxiliary unitary blocks, each containing a template. These extensions, detailed in \Cref{app:learnability}, ensure \Cref{assumpt:unwanted_corr} is satisfied without affecting the algorithm, as they are never matched during execution.} We view $\R^{k'}$ as the direct sum $\R^{t_1} \oplus \dots \oplus \R^{t_b}$ and for each subspace $\mathbb R^{t_i}$, we choose an orthonormal basis 
$\{\vect{u}_{i1},\dots,\vect{u}_{it_i}\}$.  
For each $i=1,\dots, b$, we encode the $j$-th template of $\mathcal{T}_i$, $(\vect{x}^{(i,j)},\vect{y}^{(i,j)})$, as the block-partitioned vector 
$$\vect{q}_{ij}= (\vect{0}_{t_1},\dots,\vect{0}_{t_{i-1}},\vect{u}_{ij},\dots,\vect{0}_{t_b})^\top
\in \mathbb R^{k'}$$
with the $i$-th block being equal to $\vect{u}_{ij}$. The sets of training inputs and corresponding ground-truth labels are given by 
$$\mathcal{X}=\{\vect{q}_{ij}:i\in[b], j\in[t_i]\} \subseteq \RR^{k'} \quad \text{ and } \quad \mathcal{Y}=\{\vect{y}^{(i,j)}:i\in[b],j\in [t_i]\} \subseteq \RR^k,$$
respectively. This encoding yields an orthonormal, block-partitioned set of training inputs, with no cross-block interference. In total, the dataset has size $k'\in \mathcal{O}(b)$, and $b$ depends on the number of bits used for number representation in each application. For example, to add two $10$-bit numbers, $40$ training examples are required.

\textbf{Test inputs}
\label{par:test_inp}
Based on the algorithmic execution framework of \Cref{sec:instructions} every test input $\hat{\vect{x}}$ is expressed as $\hat{\vect{x}} = \frac{1}{\sqrt{n_{\hat{\vect{x}}}}}(\hat{\vect{x}}_1,\dots,\hat{\vect{x}}_b)^\top$ with each $\hat{\vect{x}}_i$ being either $\vect{0}_{t_i}$ or matching one of the training samples of $\mathcal{X}$ in its $i$-th block, i.e. $\hat{\vect{x}}_i$ is equal to the $i$-th block of $\vect{q}_{ij}$ for some $j \in [t_i]$. In that case, we say that $\hat{\vect{x}}$ matches $\vect{q}_{ij}$. We denote by $n_{\hat{\vect{x}}}$ the number of blocks of $\hat{\vect{x}}$ that match an entry of the training set. Since each $t_i$ is $\mathcal{O}(1)$, the total number of test inputs is $\mathcal{O}(2^b)$, which is exponentially larger than the training dataset size.
For instance, the total number of test inputs for the addition of two $10$-bit numbers is $4^{10}$, which is exponentially larger than the training dataset size.%

 \begin{figure}
     \centering
     \includegraphics[width=0.8\linewidth]{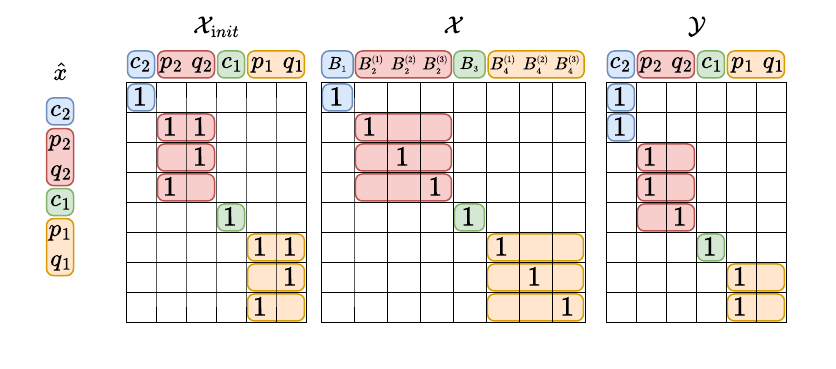}
     \caption{Visualization of the input specification of \Cref{subseq:input} for binary summation of two $\ell=2$ bit numbers. On the left, we illustrate the block structure of a pre-encoded test sample. Each block should either be zero or match the corresponding block of an element (row) of $\mathcal{X}_{\text{init}}$. On the left, we showcase the encoding procedure that creates the training dataset. The initial examples (described in \Cref{sec:instructions}) forming the rows of $\mathcal{X}_\text{init}$ are augmented and orthogonalized. Notice that the colored parts of each row of $\mathcal{X}_\textrm{init}$ along with the corresponding row of $\mathcal{Y}$ match the $\mathcal{T}_i$'s of \Cref{fig:addition}. Also note that the orthogonalization presented here is only one of the many possible ones. Finally, each row of $\mathcal{Y}$ depicts the corresponding ground-truth output for each training sample.}
     \label{fig:encoding}
 \end{figure}
 
\subsection{NTK predictor behavior theorem}
To derive this result, we introduce and discuss one additional assumption. The specific structure of our orthogonal training set and test inputs simplifies the mean of the limiting distribution, $\mu(\vect{\hat{x}})=\Theta(\vect{\hat{x}},\mathcal{X})\Theta(\mathcal{X},\mathcal{X})^{-1}\mathcal{Y}$. This predictor can be expressed as a simple weighted sum of the ground-truth training labels $\mathcal{Y}$. This sum is governed by two distinct weights: a ``signal'' weight, $w^1 \equiv w^1(\vect{\hat{x}})$, applied to training labels whose corresponding inputs match a block in the test input $\vect{\hat{x}}$, and an ``interference'' weight, $w^0 \equiv w^0(\vect{\hat{x}})$, applied to all labels from unmatched training inputs.
For any given output bit, an ``unwanted correlation'' occurs when an unmatched training sample (which receives the $w^0$ weight) also has that bit set, thus contributing interference against the correct signal. Informally, we can interpret the ratio $-w^1/w^0$ as a decision margin. Our assumption, stated below, simply requires that the total number of these unwanted correlations is less than this margin, ensuring the signal's contribution outweighs the total interference. We now formally state the assumption that guarantees learnability: 
\begin{restatable}{assumption}{assmpt}
\label{assumpt:unwanted_corr}
    For each test input $\hat{\vect{x}}$ and for each position $i \in [k']$ such that the ground-truth output $f(\hat{\vect{x}})$\footnote{There is a slight abuse of notation here when using $f(\hat{\vect{x}})$ since $f$ does not operate on encoded inputs. Depending on the context, we may use $\hat{\vect{x}}$ to denote both the pre-encoded and encoded test inputs.} has the $i$-th bit set, the number of training examples that do not match $\hat{\vect{x}}$ and have the $i$-th bit set (which we call \emph{unwanted correlations}) is less than the ratio $-w^1(\hat{\vect{x}})/w^0(\hat{\vect{x}})$.\footnote{$w^0(\hat{\vect{x}})$ is always non-positive and so the ratio is non-negative.}
\end{restatable}

While, at first, \Cref{assumpt:unwanted_corr} may seem restrictive, we remark that for many algorithms, including those examined in this work, the number of conflicts is low enough to guarantee that it is satisfied. In particular, since the SBN instruction set is Turing-complete, it can, in principle, encode any algorithm. Our framework leverages this property by simulating SBN instructions, allowing any algorithm to be represented as input to the system. This does not mean that we directly learn arbitrary Turing-computable functions, but it ensures that such functions can be expressed within our formulation without violating \Cref{assumpt:unwanted_corr}. The behavior of the NTK predictor is given in the following theorem:

\begin{restatable}[NTK predictor behavior]{theorem}{thmlearnability}
\label{thm:learnability}
Consider an algorithmic problem cast as template‐matching and encoded in a training set $(\mathcal{X}, \mathcal{Y})  \subseteq \RR^{k'} \times \RR^{k}$ as described in \Cref{subseq:input}. Then, under \Cref{assumpt:unwanted_corr}, the mean of the limiting NTK distribution $\mu(\hat{\vect{x}}) = \Theta(\hat{\vect{x}}, \mathcal{X})\Theta(\mathcal{X}, \mathcal{X})^{-1}\mathcal{Y}$ for any test input $\hat{\vect{x}} \in \RR^{k'}$ contains sign-based information about the ground-truth output, namely for each coordinate of the output $i=1,\dots,k$, $\mu(\hat{\vect{x}})_i \leq 0$ if the ground-truth bit at position $i$, $f(\hat{\vect{x}})_i$, is set, and $\mu(\hat{\vect{x}})_i > 0$ if the ground-truth bit at position $i$, $f(\hat{\vect{x}})_i$, is not set.
\end{restatable}
\begin{proof}[Proof outline]
We aim to express each $\mu(\hat{\vect{x}})_i$ as a weighted bit-sum with weights $w^0$ and $w^1$, and then rely on \Cref{assumpt:unwanted_corr} to guarantee that the signs are preserved. The orthogonality of the training dataset forces the train NTK to align with the local computation structure of the template matching framework. In particular, both the train and test NTK kernels assume a scaled identity that keeps different blocks from interfering with one another. Furthermore, since any test input activates at most one vector per block, the test diagonal elements of the test NTK kernel (measuring the similarity of the test input to each element of the training set) take only two possible values: one indicating ``this block is active'' and one indicating ``this block is empty''. Concretely, $\Theta^{-1} := \Theta(\mathcal{X}, \mathcal{X})^{-1}$ takes the form $\Tilde{\Theta}^{-1} \otimes I_{k}$ where $\Tilde{\Theta}^{-1} \in \RR^{k'\times k'}$ is a scaled identity plus a rank-1 perturbation (noise), and $\Theta(\hat{\vect{x}}, \mathcal{X})$ takes the form $\mathbf{f}^\top \otimes I_{k}$ where $\mathbf{f} \in \RR^{k'}$ takes only two values, $\mathrm{f}^1$ and $\mathrm{f}^0$ denoting match/no-match. By arranging the elements of $\mathcal{Y}$ as columns in a matrix $Y$ and using vectorization, we can rewrite the model as
\begin{equation*}
\mu(\hat{\vect{x}}) = \Theta(\hat{\vect{x}}, \mathcal{X}) \Theta^{-1}\mathcal{Y} = ((\mathbf{f}\Tilde{\Theta}^{-1})\otimes I_k )\operatorname{vec}(Y) = Y\Tilde{\Theta}^{-1} \mathbf{f}^\top.
\end{equation*}
We first show that $\Tilde{\Theta}^{-1}\mathbf{f}^\top \in \RR^{k'}$ takes only two values $w^0 \leq 0$ and $w^1 > 0$ depending on whether the corresponding entry in $\mathbf{f}$ is equal to $\mathrm{f}^0$ or $\mathrm{f}^1$. To conclude, we write 
\begin{equation}
\label{eq:prediction}
\mu(\hat{\vect{x}})_i = \sum_{(j,l)\in \mathcal{I}_+(\hat{\vect{x}})} f(\vect{q}_{jl})_i w^1 + \sum_{(j,l)\in \mathcal{I}_-(\hat{\vect{x}})} f(\vect{q}_{jl})_i w^0,
\end{equation}
where 
$$\mathcal{I}_+(\hat{\vect{x}}) = \{(j,l) : j\in [m], l \in [s_j] \text{ and } \hat{\vect{x}} \text{ matches } \vect{q}_{jl}\},$$ 
and $$\mathcal{I}_-(\hat{\vect{x}}) = \{(j,l) : j\in [m], l \in [s_j] \text{ and } \hat{\vect{x}} \text{ doesn't match } \vect{q}_{jl}\}.$$ The sets $\mathcal{I}_{+}(\hat{\vect{x}})$ and $\mathcal{I}_{-}(\hat{\vect{x}})$ partition the training dataset into two disjoint sets: the indices of the training dataset that match $\hat{\vect{x}}$ and the ones that do not. When $f(\hat{\vect{x}})_i = 0$, \Cref{eq:computation} yields a vanishing first summation and therefore $\mu(\hat{\vect{x}})_i \leq 0$. On the other hand, when $f(\hat{\vect{x}})_i = 1$, due to the fact that $\hat{\vect{x}}$ matches exactly one of the training samples from the block containing the $i$-th bit, \Cref{eq:prediction} reduces to 
$
 \mu(\hat{\vect{x}})_i = w^1 + |\mathcal{I}^1_-(\vect{\hat{x})}|\cdot w^0
$,
where $\mathcal{I}^1_{-}(\hat{\vect{x}}) = \{(j, l) \in \mathcal{I}_{-}(\hat{\vect{x}}): f(\vect{q}_{jl})_i=1\}$ denotes the index set of unwanted correlations. Under \Cref{assumpt:unwanted_corr}, we have $\mu(\hat{\vect{x}})_i > 0$ and so the sign-based ground truth information is preserved, concluding the proof. A pictorial version of the above outline is given in \Cref{fig:ntk-diagram}. 

\begin{figure}
    \centering
    \includegraphics[width=\linewidth]{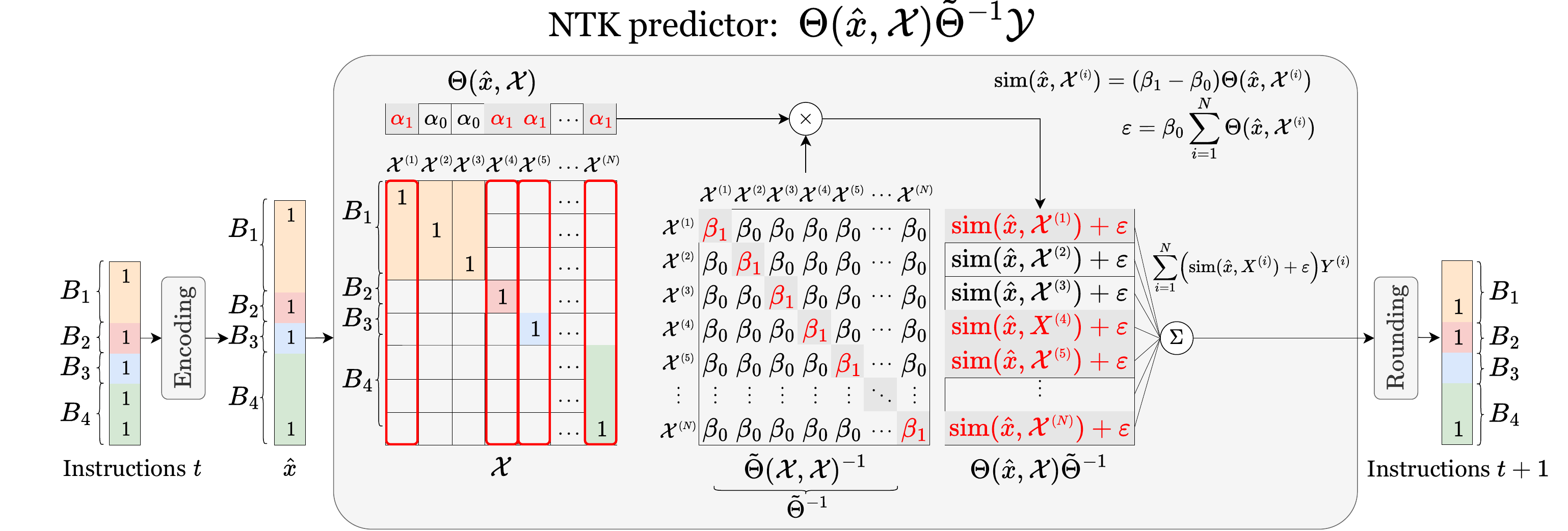}
    \caption{Illustration of the NTK predictor structure: inputs are first encoded (normalization omitted) to compute the test NTK $\Theta(\hat{\vect{x}}, \mathcal{X})$.
    Due to the test input structure, this kernel assumes two values based on matches between test and training inputs within blocks. Multiplying by $\Tilde{\Theta}^{-1}$ (which assumes the form of scaled identity plus a rank-1 noise perturbation colored black) re-weights these similarities, and the 
    multiplication by $\mathcal{Y}$ gives the final prediction. When the contribution of the unmatched entries is controlled (black similarities), the sign of each coordinate matches the ground-truth output.}
    \label{fig:ntk-diagram}
\end{figure}
\end{proof}

Regarding the algorithms discussed throughout this paper, we have the following remark\footnote{Our results for permutation, addition, and multiplication were numerically verified on various random instances by calculating the corresponding limiting mean. For code and implementation details, refer to the supplementary material on \href{https://github.com/watcl-lab/binary_algos}{https://github.com/watcl-lab/binary\_algos}.}:

\begin{remark}
    \label{rem:applications}
     The tasks of \emph{binary permutations}, \emph{binary addition}, \emph{binary multiplication}, and \emph{executing SBN instructions} all satisfy the assumptions of \Cref{thm:learnability}.
\end{remark}

The proof strategy for \Cref{rem:applications} involves computing a lower bound for the decision margin $-w^1(\hat{\vect{x}})/w^0(\hat{\vect{x}})$ over all test inputs $\hat{\vect{x}}$ and showing that the number of unwanted correlations falls below this minimum value. For example, in the case of binary addition, this threshold comes out to be equal to $4$ while the number of unwanted correlations can be at most $1$. In \Cref{fig:encoding}, this is captured by the fact that each column of $\mathcal{Y}$ contains at most $2$ ones. A complete proof of \Cref{thm:learnability} and \Cref{rem:applications} can be found in \Cref{app:learnability}.

\section{Exact learnability Part II: high-probability guarantee}
\label{sec:behavior}

In this section, we continue our proof of exact learning of algorithmic instructions using neural networks.
The conclusion of \Cref{thm:learnability} suggests a simple procedure: for each coordinate, if the output of the network is greater than zero, round to $1$, otherwise round to $0$. To extend this result from the NTK predictor to actual models and ensure high-probability guarantees of exact learning, we can independently train enough models, average their outputs, and round accordingly.\footnote{In practice, model‐to‐model variability is often controlled by training several copies (or by collecting multiple checkpoints along one training run) and then averaging either their predictions \cite{NIPS2017_9ef2ed4b} or their weights \cite{DBLP:conf/uai/IzmailovPGVW18}. Our ``ensemble complexity'' result gives a clean theoretical analogue of this variance‐reduction trick, with concrete high‐probability guarantees on post‐rounding accuracy.} We define \emph{ensemble complexity} as the number of models required to achieve a desired level of post-rounding accuracy. In what follows, we derive a lower bound on the ensemble complexity of learning algorithmic instructions and give its asymptotic order. This completes the proof that neural networks can, with high probability, exactly learn algorithmic instructions.

Given a test input $\hat{\vect{x}}$ with ground truth $\hat{\vect{y}}=f(\hat{\vect{x}})$, let $F^{j}(\hat{\vect{x}})$ be the output of the $j$-th model in an ensemble of $N$ independently trained networks. By \Cref{thm:output}, each $F^{j}(\hat{\vect{x}})$ is drawn i.i.d. from $\mathcal{N}(\mu(\hat{\vect{x}}),\Sigma(\hat{\vect{x}}))$, so every coordinate $F^{j}(\hat{\vect{x}})_i$ follows $\mathcal{N}(\mu(\hat{\vect{x}})_i,\sigma^{2}(\hat{\vect{x}}))$.\footnote{See \Cref{app:proof_orders} for the calculation of $\sigma^{2}(\hat{\vect{x}})$.} Define the ensemble mean $G(\hat{\vect{x}}) = \frac{1}{N} \sum_{j=1}^N F^j(\hat{\vect{x}})$. Because $\mu(\hat{\vect{x}})_i\le0$ when $\hat{\vect{y}}_i=0$ and $\mu(\hat{\vect{x}})_i>0$ when $\hat{\vect{y}}_i=1$, rounding $G(\hat{\vect{x}})_i$ is correct if $|G(\hat{\vect{x}})_i - \mu(\hat{\vect{x}})_i| <|\mu(\hat{\vect{x}})_i|/2$. 

Applying a standard Gaussian concentration bound given in \Cref{lemm:concentration} and the union bound, we obtain, for any $\delta\in(0,1)$, perfect post-rounding accuracy with probability $1-\delta$ whenever the number of averaged models $N$ satisfies:
\begin{equation}
    \label{eq:ensemble_uniform}
    N \geq 8\ \max_{\hat{\vect{x}}, i\in [k]} \left\{ \frac{\sigma^2(\hat{\vect{x}})}{\mu^2(\hat{\vect{x}})_i}\right\}\ln\left(\frac{2k'}{\delta}\right).
\end{equation}

\begin{figure}
    \centering
    \includegraphics[width=\linewidth]{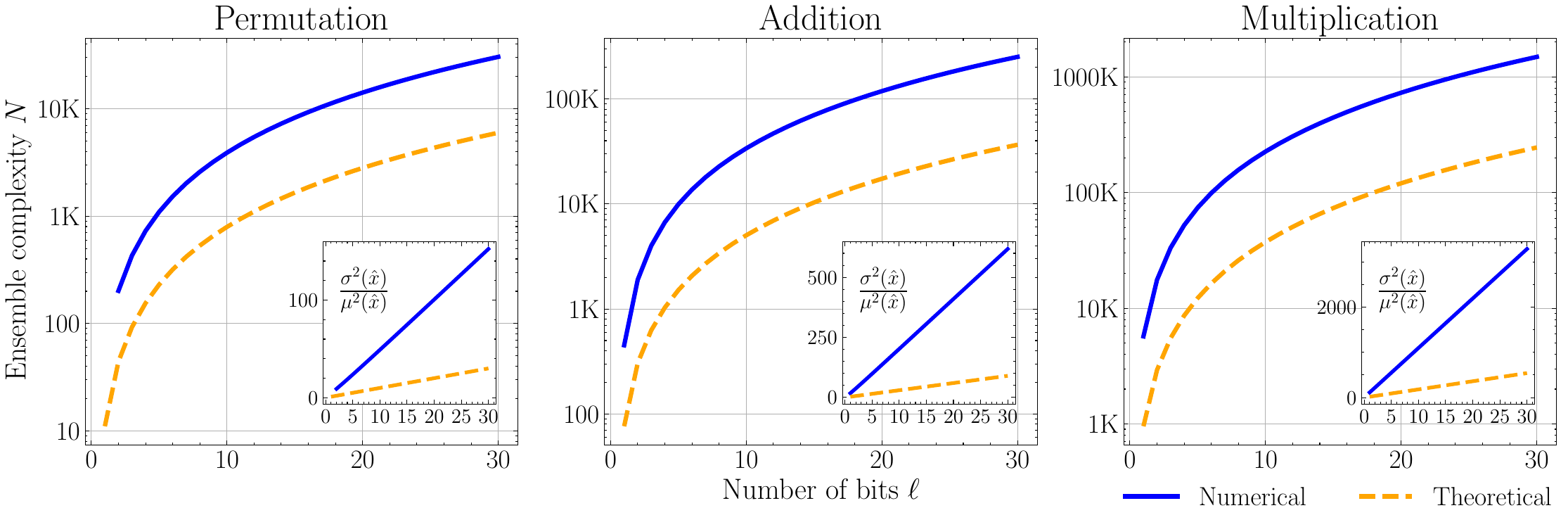}
    \caption{Numerical and theoretical estimates of ensemble complexity $N$ (in log-scale) for permutation, addition, and multiplication tasks as a function of bit length $\ell$. Ensemble complexity is computed via a union bound over all possible inputs and algorithmic executions for a given $\ell$. Inset blocks illustrate the ratio of variance to mean in \Cref{eq:ensemble_uniform}, estimated using input size $k'$. This ratio increases linearly with $k'$ up to a constant. The same ratio is used in the theoretical estimate of $N$, which matches the numerical estimate in growth rate, differing only by a constant factor.} \label{fig:ensemble_complexity}
\end{figure}

To derive the order of the bound on the ensemble complexity of \Cref{eq:ensemble_uniform}, we need to analyze the asymptotic orders of $\mu(\hat{\vect{x}})$ and $\sigma^2(\hat{\vect{x}})$. The result is summarized in the following technical lemma:

\begin{restatable}{lemma}{orders}
\label{lemm:orders}
    Suppose we train on any of the four tasks (permutations, addition, multiplication, SBN instructions), as described in \Cref{sec:ntk_learnability}. Let  $\hat{\vect{x}}$ be a test input unseen during training matching $n_{\hat{\vect{x}}} > 1$ training entries and let $\mu(\hat{\vect{x}})$ and $\sigma^2(\hat{\vect{x}})$ be as in \Cref{thm:output}. Then $\sigma^2(\hat{\vect{x}}) \in \mathcal{O}(1/k')$, and, for each $i \in [k]$, depending on the relationship between $n_{\hat{\vect{x}}}$ and $k'$ we have:
    \[
         |\mu(\hat{\vect{x}})_i| \in 
        \begin{cases}
            \Theta\left(\frac{1}{k'}\right) & \text{if}\; n_{\hat{\vect{x}}} \text{ is const.} \\
            \Theta\left(\frac{\sqrt{n_{\hat{\vect{x}}}}}{k'}\right) &\text{if}\; n_{\hat{\vect{x}}} \text{ is non-const.}\\ &\text{and sublinear in } k' \\
            \Theta\left(\frac{1}{\sqrt{k'}}\right) & \text{if}\; n_{\hat{\vect{x}}} = ck' \textrm{ for }\\ &\textrm{some } c\in (0, 1]
        \end{cases}
        \;\textrm{ and }\;
        |\mu(\hat{\vect{x}})_i| \in \begin{cases}
            \Theta(1) & \text{if}\; n_{\hat{\vect{x}}} \text{ is const.} \\
            \Theta\left(\frac{1}{\sqrt{n_{\hat{\vect{x}}}}}\right) & \text{if}\; n_{\hat{\vect{x}}} \text{ is non-const.}\\ &\text{and sublinear in } k'\\
            \Theta\left(\frac{1}{\sqrt{k'}}\right) & \text{if}\; n_{\hat{\vect{x}}} = ck'\textrm{ for}\\ &\textrm{some }  c\in(0,1) \\ 
            \Theta\left(\frac{1}{\sqrt{k'}}\right) & \text{if}\; n_{\hat{\vect{x}}} = k \textrm{ and there}\\ &\textrm{are unwanted corr.}\\
            \Theta\left(\frac{1}{k'}\right) & \text{if}\; n_{\hat{\vect{x}}}=k' \textrm{ and there}\\ &\textrm{are no unwanted corr.}
        \end{cases}
        \]
        when the ground-truth bit at position $i$ is not set ($f(\hat{\vect{x}})_i = 0$) or set ($f(\hat{\vect{x}})_i = 1$), respectively.
\end{restatable}

The proof of \Cref{lemm:orders} (including the calculation of $\sigma^2(\hat{\vect{x}})$) can be found in \Cref{app:proof_orders}. In light of these asymptotic results, the uniform bound of \Cref{eq:ensemble_uniform} behaves like $\mathcal{O}(k'\log k')$. An application of the union bound shows that for an algorithm requiring $m$ iterations, the ensemble complexity bound when accounting for all $\mathcal{O}(2^b)$ possible test inputs (where $b$ is the number of templates as in \Cref{sec:instructions}) and all $m$ iterations behaves like $\mathcal{O}(k'b + k'\log m)$. In particular, for the tasks considered, we have the following remark:

\begin{remark}
\label{rem:ensemble}
    For the tasks of binary permutation ($b=k'=\ell$, $m=1$), binary addition ($b=2\ell$, $k'=4\ell$, $m=2\ell$), and binary multiplication ($b=11\ell$, $k'=21\ell$, $m=4\ell^2 + 3\ell$) the ensemble complexity scales like $\mathcal{O}(\ell^2)$, where $\ell$ is the bit length for each application.
\end{remark}

\Cref{fig:ensemble_complexity} plots numerical and theoretical estimates for the ensemble complexity of the permutation, addition, and multiplication tasks, showcasing the conclusion of \Cref{rem:ensemble}. In \Cref{app:experiments}, we verify our theory with an empirical estimate of the ensemble complexity for the permutation task. This is done by training multiple two-layer fully connected feed-forward networks, each with 50,000 hidden units, using full-batch gradient descent to form an ensemble.

\section{Limitations and future work}
\label{sec:conclusion}

In this work, we have demonstrated that two-layer fully connected feed-forward neural networks in the infinite-width limit can learn to execute algorithmic instructions expressed within our template matching framework (including binary permutations, binary addition, binary multiplication, and execution of SBN instructions) using a training set of logarithmic size in the number of possible binary inputs. This provides an affirmative answer to the question of whether neural networks can learn to execute long sequences of binary-encoded instructions exactly.

Our analysis, however, relies on several simplifying assumptions that bound its generality. The first concerns data orthogonality and explicit instruction access, which guarantee that each local computation step is independently learnable. Future work could investigate whether exact learning remains possible under correlated training examples or when the network must infer primitive instructions from only partial input–output traces. The second limitation arises from the bounded memory setting of our framework. Increasing the available memory changes the input dimensionality of the model and, therefore, requires retraining, so extrapolation to longer inputs does not occur automatically. Within this bounded memory regime, exact algorithmic execution remains achievable using only logarithmically many short training examples, but extending these results to architectures that naturally process variable-length inputs, such as RNNs, Transformers, or GNNs, would be a valuable next step. For example, the use of GNNs on bounded degree graphs may enable controlled forms of length generalization with respect to graph size while preserving the theoretical structure of our exact learning framework.
\section*{Acknowledgements}

K.~Fountoulakis would like to acknowledge the support of the Natural Sciences and Engineering Research Council of Canada (NSERC). Cette recherche a \'et\'e financ\'ee par le Conseil de recherches en sciences naturelles et en g\'enie du Canada (CRSNG), [RGPIN-2019-04067, DGECR-2019-00147].

G.~Giapitzakis would like to acknowledge the support of the Onassis Foundation - Scholarship ID: F ZU 020-1/2024-2025.

\bibliography{references}
\bibliographystyle{plainnat}

\newpage
\appendix

\part{Appendix} %
\parttoc %
\section{Notation and Preliminaries}
\label{app:prelim}

For two matrices $A_1 \in \RR^{n_1 \times m_1}$, $A_2 \in \RR^{n_2 \times m_2}$ we denote by $A_1 \otimes A_2 \in \RR^{n_1n_2 \times m_1m_2}$ their Kronecker product. It is relatively easy to show that when $A$ and $B$ are square matrices (i.e. $n_1=m_1$ and $n_2=m_2$), the eigenvalues of $A_1 \otimes A_2$ are given exactly by the products of the eigenvalues of $A_1$ and $A_2$. In particular, $A_1 \otimes A_2$ is positive definite if $A_1$ and $A_2$ are positive definite.

\subsection{Useful Results}
\label{app:lemm}
In this section, we state two results from linear algebra and probability theory that are used to derive our main results. The first result is used to compute the train NTK matrix:

\begin{theorem}[\citealt{sherman}]
\label{thm:sherman_mor}
    Suppose $A\in \RR^{n\times n}$ is an invertible matrix and $\vect{u},\vect{v} \in \RR^n$. Then $A + \vect{u}\vect{v}^\top$ is invertible if and only if $1+\vect{v}^\top A^{-1} \vect{u} \neq 0$. In this case, 
    $$(A + \vect{u}\vect{v}^\top)^{-1}=A^{-1}-\frac{A^{-1}\vect{u}\vect{v}^\top A^{-1}}{1+\vect{v}^\top A^{-1} \vect{u}}$$
\end{theorem}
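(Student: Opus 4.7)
The statement splits into an ``if and only if'' claim about invertibility and an explicit formula for the inverse. My plan is to verify the formula by direct multiplication in the direction where the scalar $\alpha := 1 + \vect{v}^\top A^{-1} \vect{u}$ is nonzero, and to handle the converse by exhibiting an explicit nonzero vector in the kernel of $A + \vect{u}\vect{v}^\top$ whenever $\alpha = 0$.

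For the sufficient direction, I would set
\[
B := A^{-1} - \frac{A^{-1}\vect{u}\vect{v}^\top A^{-1}}{\alpha},
\]
and compute $(A + \vect{u}\vect{v}^\top) B$ by distributing. The four resulting terms are $I$, $\vect{u}\vect{v}^\top A^{-1}$, $-\tfrac{1}{\alpha}\vect{u}\vect{v}^\top A^{-1}$, and $-\tfrac{1}{\alpha}\vect{u}(\vect{v}^\top A^{-1}\vect{u})\vect{v}^\top A^{-1}$. The key step is recognizing that $\vect{v}^\top A^{-1}\vect{u}$ is a scalar, so the last two terms combine into $-\tfrac{1}{\alpha}(1 + \vect{v}^\top A^{-1}\vect{u})\vect{u}\vect{v}^\top A^{-1} = -\vect{u}\vect{v}^\top A^{-1}$, which cancels the second term. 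Thus $(A+\vect{u}\vect{v}^\top)B = I$, and by essentially identical computation $B(A+\vect{u}\vect{v}^\top) = I$, so $B$ is the two-sided inverse.

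For the necessary direction, suppose $\alpha = 0$. If $\vect{u} = \vect{0}$, then $A + \vect{u}\vect{v}^\top = A$ is invertible but $\alpha = 1 \neq 0$, a contradiction; so $\vect{u} \neq \vect{0}$. Since $A$ is invertible, $A^{-1}\vect{u} \neq \vect{0}$ as well. I would then compute
\[
(A + \vect{u}\vect{v}^\top)(A^{-1}\vect{u}) = \vect{u} + \vect{u}(\vect{v}^\top A^{-1}\vect{u}) = (1 + \vect{v}^\top A^{-1}\vect{u})\,\vect{u} = \alpha\, \vect{u} = \vect{0},
\]
so $A^{-1}\vect{u}$ is a nonzero element of the kernel and $A + \vect{u}\vect{v}^\top$ is singular. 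Contrapositively, invertibility forces $\alpha \neq 0$.

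There is no real obstacle here beyond bookkeeping; the only subtlety is the trivial edge case $\vect{u} = \vect{0}$ (or symmetrically $\vect{v} = \vect{0}$), which must be dispatched separately so that the kernel argument goes through with a genuinely nonzero witness $A^{-1}\vect{u}$. Once that is noted, both directions reduce to the scalar identity $\alpha^{-1}(1 + \vect{v}^\top A^{-1}\vect{u}) = 1$ and a careful ordering of matrix-vector products.
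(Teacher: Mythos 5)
Your proof is correct and complete: the direct verification that $(A+\vect{u}\vect{v}^\top)B = B(A+\vect{u}\vect{v}^\top) = I$ when $\alpha := 1+\vect{v}^\top A^{-1}\vect{u}\neq 0$, and the exhibition of the nonzero kernel vector $A^{-1}\vect{u}$ when $\alpha = 0$, together establish both the equivalence and the formula. The paper does not prove this statement at all—it is quoted from \citet{sherman}—so your argument simply supplies the standard proof of the cited result. One small remark: the edge case discussion is superfluous as written, since $\alpha=0$ already rules out $\vect{u}=\vect{0}$ (and likewise $\vect{v}=\vect{0}$) by the very contradiction you give, which is all the kernel argument needs; no separate symmetric case for $\vect{v}$ is required.
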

 
The second result is a concentration bound for sums of Gaussian random variables, which we used to derive our ensemble complexity bounds:

\begin{lemma}
\label{lemm:concentration}
    Let $X_1,X_2,\dots,X_n$ be independent Gaussian random variables with mean $\mu$ and variance $\sigma^2$ and let $\overline{X}_n=\sum_{i=1}^n X_i$. Then for all $t>0$, we have:
    \begin{equation*}
        \mathbb{P}\left\{|\overline{X}_n - \mu| \geq t \right\} \leq 2 \exp\left(- \frac{nt^2}{2\sigma^2}\right)
    \end{equation*}
\end{lemma}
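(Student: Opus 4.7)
The plan is to exploit the fact that an affine combination of independent Gaussians is again Gaussian, reducing the claimed two-sided inequality to a standard one-sided tail bound for a single centered Gaussian, then doubling via symmetry. Concretely, interpreting $\overline{X}_n$ as the sample mean $\frac{1}{n}\sum_{i=1}^n X_i$, the first step is to observe that
\begin{equation*}
\overline{X}_n - \mu = \frac{1}{n}\sum_{i=1}^n (X_i - \mu)
\end{equation*}
is a linear combination of independent $\mathcal{N}(0,\sigma^2)$ variables, so $\overline{X}_n - \mu \sim \mathcal{N}(0, \sigma^2/n)$. This reduces the whole lemma to the classical Gaussian tail bound $\mathbb{P}\{|Z| \geq t\} \leq 2\exp(-t^2/(2s^2))$ for $Z \sim \mathcal{N}(0,s^2)$, applied with $s^2 = \sigma^2/n$.

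Next I would prove that tail bound by the standard Chernoff/Cram\'er--Chernoff argument. For any $\lambda > 0$, Markov's inequality applied to the nonnegative random variable $e^{\lambda Z}$ yields
\begin{equation*}
\mathbb{P}\{Z \geq t\} \leq e^{-\lambda t}\,\mathbb{E}[e^{\lambda Z}] = \exp\!\Big(\tfrac{\lambda^2 s^2}{2} - \lambda t\Big),
\end{equation*}
where I use the textbook formula for the moment generating function of a centered Gaussian. Minimizing the exponent over $\lambda > 0$ gives the optimal choice $\lambda^* = t/s^2$ and the one-sided bound $\mathbb{P}\{Z \geq t\} \leq \exp(-t^2/(2s^2))$. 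By symmetry of the centered Gaussian, $\mathbb{P}\{Z \leq -t\}$ obeys the same bound, so a union bound produces the factor of $2$. Substituting $s^2 = \sigma^2/n$ and $Z = \overline{X}_n - \mu$ delivers exactly the stated inequality.

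There is no real obstacle in this argument: every step is a standard textbook computation. The only place one has to be even slightly careful is the MGF of a Gaussian (a short completion-of-the-square under the integral), and the optimization of the Chernoff exponent, which is a one-variable calculus exercise. A cleaner alternative, should one wish to avoid the MGF explicitly, is to integrate the density directly and use the bound $\int_t^\infty e^{-u^2/(2s^2)}du \leq s\sqrt{\pi/2}\cdot e^{-t^2/(2s^2)}$ (from $u \geq t$ inside the exponent after a change of variable), but the Chernoff approach is the most transparent and is what I would present.
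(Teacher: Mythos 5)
Your proposal is correct and follows essentially the same route as the paper: a Chernoff argument via Markov's inequality applied to $e^{\lambda Z}$, the Gaussian moment generating function, optimization at $\lambda^* = nt/\sigma^2$, and symmetry of the centered Gaussian to obtain the factor of $2$. You also rightly interpret $\overline{X}_n$ as the sample mean (so that $\overline{X}_n - \mu \sim \mathcal{N}(0,\sigma^2/n)$), which is exactly what the paper's proof uses despite the statement writing $\overline{X}_n = \sum_{i=1}^n X_i$.
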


\begin{proof}
We will use the Chernoff technique. Let $\lambda > 0$. We have $\overline{X}_n - \mu  \sim \mathcal{N}\left(0, \frac{\sigma^2}{n}\right)$ and so by symmetry and Markov's inequality we get: 
\begin{align}
\label{eq:lambda}
\mathbb{P}\left\{|\overline{X}_n-\mu| \geq t\right\} &= 2\mathbb{P}\left\{\overline{X}_n-\mu \geq t\right\} = 2\mathbb{P}\left\{e^{\lambda(\overline{X}_n-\mu) }\geq e^{\lambda t}\right\} \leq e^{-\lambda t}\cdot \mathbb{E}\left[e^{\lambda(\overline{X}_n-\mu)}\right]
\end{align}
The expectation on the right-hand side is equal to the moment-generating function of a normal distribution with mean $0$ and variance $\sigma^2/n$ and so it is equal to $\exp\left(\frac{\sigma^2 \lambda^2}{2n}\right)$. Now let 
\begin{equation*}
    \phi(\lambda) = \exp\left(\frac{\sigma^2 \lambda^2}{2n} - \lambda t\right)
\end{equation*}
be the right-hand side of \Cref{eq:lambda}. Minimizing $\phi(\lambda)$ with respect to $\lambda$ we find that the minimum occurs at $\lambda^* = \frac{nt}{\sigma^2}$ and plugging this back into \Cref{eq:lambda} gives the required bound.
\end{proof}

\section{Constructive proofs}
\label{app:algorithms}

In this section, we outline the set of instructions used to illustrate the steps involved in the algorithms described earlier. 
To ensure clarity and avoid unnecessary repetition, we adopt certain conventions in the presentation of these instructions.

To ensure the correctness of our constructive proofs for the numerical tasks presented below, we include a numerical validation.\footnote{Our source code can be found at \href{https://github.com/watcl-lab/binary_algos}{https://github.com/watcl-lab/binary\_algos}.} This validation uses the instructions defined below for permutation, addition, and multiplication. Using the Neural Tangents package \cite{neuraltangents2020}, we compute the NTK predictor (as in \Cref{thm:output}). Applying the encoding and rounding procedures described in \Cref{sec:ntk_learnability} and \Cref{sec:behavior}, we demonstrate that the implementations are numerically correct for bit lengths up to $\ell=10$. Additionally, we provide a demonstration script that more descriptively illustrates each step of the algorithms as executed within our framework.

\subsubsection*{Conventions}
Unless otherwise stated, all variables refer to Boolean values (i.e., elements of $\{0, 1\}$), arrays of Boolean values, or natural numbers as appropriate. Let $\textrm{Index}(A)$ denote the index set of an array $A$.

\begin{itemize}
    \item \textbf{Logical equivalence:} For Boolean variables $A$ and $B$,
    \[
    A = B \ \overset{\text{def}}{\Longleftrightarrow} (A = 1\ \AND B = 1) \OR (A = 0\ \AND B = 0)
    \]
    This represents equality of Boolean values, not assignment.

    \item \textbf{Logical inequality:}
    \[
    A \neq B \ \overset{\text{def}}{\Longleftrightarrow} \ (A = 1\ \AND B = 0) \OR (A = 0\ \AND B = 1)
    \]

    \item \textbf{Assignment:} We use the symbol $\gets$ to denote assignment. For Boolean variables:
    \[
    B \gets A \quad \text{means that } B \text{ is assigned the current value of } A
    \]
    \item \textbf{Universal indexing:}
    \[
    A[\ALL] \ \overset{\text{def}}{\Longleftrightarrow} \ \forall i \in \textrm{Index}(A), \ A[i]
    \]

    \item \textbf{Bitwise comparison:}
    \[
    A[\ALL] = B[\ALL] \ \overset{\text{def}}{\Longleftrightarrow} \ \forall i \in \textrm{Index}(A): A[i] = B[i]
    \]

    \item \textbf{Bitwise assignment:}
    \[
    A[\ALL] \gets v \ \overset{\text{def}}{\Longleftrightarrow} \ \forall i \in \textrm{Index}(A): A[i] \gets v
    \]
    \item \textbf{Binary representation:} Let $\textrm{bin}(v)$ denote the binary representation of a natural number $v$, encoded as a Boolean array. The bit width is inferred from the context unless specified.
\end{itemize}

\subsection{Binary permutation}
\label{app:permutation}
In this section, we demonstrate how the template matching framework can be applied to execute binary permutations. Let the binary input be denoted by a binary number ${\tt p}$. We begin by defining the input structure in terms of blocks and then construct the corresponding templates.

\textbf{Blocks}: For an $\ell$-bit number {\tt p}, we design $\ell$ blocks, each encoding a single bit of the number. Let each block correspond to a bit {\tt p[i]}, thus we have:

\begin{itemize}
\item {\tt (p[i])} for $i \in [\ell]$
\end{itemize}

\textbf{Instructions}: Given this block structure, we now define the instructions that encode the desired permutation. Consider a mapping $\pi: [\ell] \to [\ell]$ that specifies the permutation: it takes a bit position as input and returns its new position after the permutation. For example, if the third bit of the input is to be moved to the fifth bit position, then $\pi(3) = 5$.

Based on this mapping, we construct $\ell$ samples, one for each block, which encodes the transformation defined by $\pi$.

\noindent\textbf{Instructions: Permutation}

\begin{minimalbox}
\INPUT \texttt{x[p[i]] = 1}\\
\OUTPUT \texttt{y[p[$\pi$(i)]] $\gets$ 1}\\
Instruction count: $\ell$
\end{minimalbox}

Each bit permutation is thus encoded as an individual instruction in the template set. This captures the behavior that when a specific bit position is activated in the input, its permuted position must also be activated in the output.

\subsection{Binary addition}
\label{app:addition}

With the established framework, we now illustrate how to apply the template matching principle to simulate binary addition. Throughout this and other algorithmic examples, we often assign descriptive variable names to improve clarity. These identifiers serve only as labels and do not affect computation.

Let the binary inputs be denoted \texttt{p} and \texttt{q}, each consisting of $\ell$ bits. The result of their sum requires $\ell + 1$ bits. We begin by defining the block structure of $\vect{x}$.

\paragraph{Blocks:} In this implementation, we organize the input into $2\ell$ blocks, $\ell$ blocks encode the bits of \texttt{p} and \texttt{q}, and the $\ell$ blocks encode the corresponding carry bits \texttt{c}.

\begin{itemize}
    \item \texttt{(p[i], q[i])} for $i\in [\ell]$
    \item \texttt{(c[i])} for $i\in [\ell]$
\end{itemize}

Assignments and conditions are written using square bracket notation. Since each block comprises uniquely named variables, individual variables can be referenced directly by name. For example, setting the second carry variable to 1 is expressed as \texttt{x[c[2]] $\gets$ 1}.

The addition algorithm follows a ripple-carry approach using half-adders \citep{harrisdigital2012}. It proceeds in two alternating phases: bitwise summation and carry propagation.

In the summation phase, the algorithm adds the bits \texttt{x[p[i]]} and \texttt{x[q[i]]} for each $i$, storing the result back in \texttt{x[p[i]]} and placing any resulting carry in \texttt{x[c[i]]}. In the subsequent carry propagation phase, the carry \texttt{x[c[i]]} is transferred to \texttt{x[q[i+1]]}, allowing it to participate in the next summation step.

This iterative process alternates between $\ell$ summation steps and $\ell$ carry propagation steps. After $2\ell$ iterations, the computation reaches a steady state.

In our representation, the final output consists of the carry of the most-significant bit \texttt{y[c[}$\ell$\texttt{]]} concatenated with all bits $i \in \{\ell, \dots, 1\}$ in \texttt{y[p[i]]}, yielding a total of $\ell + 1$ output bits.

The following instructions are purposely designed to minimize the number of unwanted correlations when using the NTK predictor. Specifically, the highest number of such correlations (also referred to as conflicts) occurs in the coordinates encoding the bits of the summand {\tt p[i]} and the most significant carry bit {\tt c[$\ell$]}, where two instructions share a non-zero entry for the same coordinate. 
Notably, however, the maximum number of conflicts per coordinate -- equal to 1 in this case -- remains constant and does not increase with the bit count $\ell$. This bounded conflict rate enables learnability, as further discussed in \Cref{app:learnability}.

\paragraph{Instructions:} To capture the aforementioned processes in the blocks, we define a set of representative instructions. By convention, we assume that any variable not explicitly set in the output is assigned a value of zero.

\noindent\textbf{Instructions: Bitwise addition}

\begin{minimalbox}
\INPUT \texttt{x[p[i]] = 0} \AND \texttt{x[q[i]] = 1}\\
\OUTPUT \texttt{y[p[i]] $\gets$ 1}\\
Instruction count: $\ell$
\end{minimalbox}

\begin{minimalbox}
\INPUT \texttt{x[p[i]] = 1} \AND \texttt{x[q[i]] = 0}\\
\OUTPUT \texttt{y[p[i]] $\gets$ 1}\\
Instruction count: $\ell$
\end{minimalbox}

\begin{minimalbox}
\INPUT \texttt{x[p[i]] = 1} \AND \texttt{x[q[i]] = 1}\\
\OUTPUT \texttt{y[c[i+1]] $\gets$ 1}\\
Instruction count: $\ell$
\end{minimalbox}

The case where both \texttt{x[p[i]]} and \texttt{x[q[i]]} are zero does not require an instruction. Since no template matches, the default behavior results in all outputs being zero for that block and its carry, which is consistent with expected addition logic.

\noindent\textbf{Instructions: Carry propagation}

\begin{minimalbox}
\INPUT \texttt{x[c[i]] = 1}\\
\OUTPUT ($i < \ell$): \texttt{y[q[i+1]] $\gets$ 1}\\
\hspace*{37pt} ($i = \ell$): \texttt{y[c[i]] $\gets$ 1}\\
Instruction count: $\ell$
\end{minimalbox}

This defines the \emph{carry-propagation} behavior: when the carry at position $i$ is one, its effect is passed to the next summand block. 

An important observation is that template matching across different blocks does not interfere between phases. During bitwise summation, the carry blocks are set to zero and thus remain inactive. Conversely, during carry propagation, all \texttt{x[q[i]]} entries are empty, so summation remains static, allowing the carry from \texttt{x[c[i-1]]} to be transmitted to \texttt{x[q[i]]} without conflict.

Finally, note that once \texttt{x[c[$\ell$]]} becomes non-zero, it remains set for the remainder of the algorithm. This value represents the most significant bit (MSB) of the final result.

\noindent\textbf{Termination:}
As previously mentioned, this implementation reaches a steady state after $2\ell$ iterations. However, it is also possible to introduce a termination flag that is triggered once a specific condition is met. In this context, we define a termination flag that becomes active once $2\ell$ iterations have elapsed. To implement this, we introduce the following additional blocks:

\begin{itemize}
    \item \texttt{(counter[i])} for $i\in [2\ell]$
\end{itemize}

\noindent\textbf{Instructions: termination}

\begin{minimalbox}
\INPUT \texttt{x[counter[i]] = 1}\\
\OUTPUT (if $i < 2\ell$): \texttt{y[counter[i+1]] $\gets$ 1}\\
\hspace*{40pt}(if $i = 2\ell$): \texttt{y[counter[i]] $\gets$ 1}
\end{minimalbox}

This design introduces $2\ell$ additional blocks. Once the first block is activated, each block subsequently activates the next, until \texttt{x[counter[$2\ell$]]} is reached. The activation of \texttt{x[counter[$2\ell$]]} indicates that the algorithm has finished.

\subsection{Binary multiplication}
\label{app:multiplication}

In this section, we describe the structure and the components used to perform binary multiplication between two $\ell$-bit binary variables.
Our implementation simulates the shift-and-add multiplication algorithm \citep{harrisdigital2012}.

In essence, the shift-and-add algorithm multiplies two binary numbers by scanning each bit of the multiplier. If a bit is $1$, the appropriately shifted multiplicand is added to the running total. This method mirrors the principle of long multiplication, but relies solely on shifts and additions rather than full multiplications.

To implement this algorithm, we divide it into four distinct processes:
\begin{enumerate}
    \item Check the least significant bit (LSB) of the multiplier
    \item Add multiplicand to the accumulating total
    \item Copy the multiplicand to the addition scratchpad
    \item Shift multiplicand and multiplier
\end{enumerate}

Each of these processes is implemented using one or more blocks in the input.
Because of how the algorithm operates, we represent the multiplicand using $2\ell$ bits, initializing the most significant $\ell$ bits to zero.
The bits are stored in little-endian form, so the first entry represents the least significant bit (LSB).

\paragraph{Blocks:} We begin by defining the block structure:

\begin{itemize}
\item {\tt (multiplier[1], to\_shift\_right[1], to\_check\_lsb)}
\item {\tt (multiplier[i], to\_shift\_right[i]) for $i \in [2,\ell]$}
\item {\tt (multiplicand[i], to\_shift\_left[i], to\_copy\_to\_sum\_q[i]) for $i \in [2\ell]$}
\item {\tt (sum\_p[i], sum\_q[i]) for $i \in [2\ell]$}
\item {\tt (sum\_c[i]) for $i \in [2\ell]$}
\item {\tt (sum\_counter[i]) for $i \in [4\ell]$}
\end{itemize}

The first group of blocks stores the multiplier bits along with their associated shift flags. Flags do not hold data themselves. Instead, they signal when a specific action should be triggered. These are typically (though not exclusively) prefixed with \texttt{to\_}.
The block of the least significant bit of the multiplier includes an additional flag for checking its value.

The multiplicand bits are also paired with their shift flags and an extra flag for copying. This copying flag signals when to transfer the multiplicand into the summation scratchpad.

The remaining blocks represent the summation components, as introduced in \Cref{app:addition}.

Therefore, in total, there are $11\ell$ blocks. However, during execution, at most $7\ell + 1$ can be active at any time during execution. This restriction arises from the counter blocks, where only one block can be active at a time.

The result of the multiplication is a $2\ell$-bit number, stored in \texttt{x[sum\_p][\ALL]}.
Each iteration of the algorithm may involve one or more of the four described processes, and some processes themselves may span multiple iterations.

\begin{itemize}
    \item Addition: $4\ell$ iterations
	\item Check least significant bit (LSB): 1 iteration
	\item Copy multiplicand: 1 iteration
	\item Shift multiplicand to the left and multiplier to the right (simultaneously): 1 iteration
\end{itemize}

The worst-case runtime occurs when the multiplier consists entirely of 1s, triggering all operations in each cycle.
In this case, the algorithm performs $\ell$ full iterations, resulting in a total execution count of $4\ell^2 + 3\ell$.

The following instructions exhibit a finite number of conflicts per coordinate, a crucial property for ensuring NTK learnability, as discussed in more detail in \Cref{app:learnability}. These conflicts can be quantified by counting the number of outputs that share the same coordinate. In the case of multiplication, the number of conflicts per coordinate is 2. 
This occurs in the coordinates corresponding to the $\texttt{multiplier}$ and $\texttt{multiplicant}$ bits, which may be either modified or preserved depending on the values of the flags within their respective blocks.

\paragraph{Instructions}
Based on processes and the block structure previously defined, we now define the binary instructions for each of the processes and their corresponding blocks to perform binary multiplication.

\noindent\textbf{Instructions: preserve multiplier and multiplicand}

While any other secondary process is being executed, the values in the multiplier and multiplicand bits must be preserved. For that, we define:

\begin{minimalbox}
\INPUT {\tt x[multiplier[1]] = 1 \AND x[to\_shift\_right[1]] = 0 \AND} \\
\hspace*{34pt}{\tt x[to\_check\_lsb] = 0}\\
\OUTPUT {\tt y[multiplier[1]] $\gets$ 1}\\
Instruction count: $1$
\end{minimalbox}

\begin{minimalbox}
\INPUT {\tt x[multiplier[i]] = 1 \AND x[to\_shift\_right[i]] = 0}\\
\OUTPUT ($i>1$) {\tt  y[multiplier[i]] $\gets$ 1}\\
Instruction count: $\ell-1$
\end{minimalbox}

\begin{minimalbox}
\INPUT {\tt x[multiplicand[i]] = 1 \AND x[to\_shift\_left[i]] = 0}\\
\OUTPUT {\tt y[multiplicand[i]] $\gets$ 1}\\
Instruction count: $2\ell$
\end{minimalbox}

\noindent\textbf{Instructions: check least significant bit}

For this stage, we have to define two instructions for when the \texttt{to\_check\_lsb} flag is activated. If the LSB of the multiplier is equal to one, then we start the addition process by activating the flags to copy the multiplicand to the addition stage. In contrast, if the LSB is zero, we trigger the shifting process of both multiplicand and multiplier.

\begin{minimalbox}
\INPUT {\tt x[multiplier[1]] = 1 \AND x[to\_shift\_right[1]] = 0 \AND\\ \hspace*{34pt}x[to\_check\_lsb] = 1}\\
\OUTPUT {\tt y[multiplier[1]] $\gets$ 1 \AND y[to\_copy\_to\_sum\_q[\ALL]] $\gets$ 1}\\
Instruction count: 1
\end{minimalbox}

and 

\begin{minimalbox}
\INPUT {\tt x[multiplier[1]] = 0 \AND x[to\_shift\_right[1]] = 0 \AND \\\hspace*{34pt}x[to\_check\_lsb] = 1}\\
\OUTPUT {\tt y[to\_shift\_right[\ALL]]$\gets$1 \AND y[to\_shift\_left[\ALL]]$\gets$1}\\
Instruction count: 1
\end{minimalbox}

\textbf{Instructions: copy multiplicand to addition block}

Once the multiplicand copy flag is activated, we have to send the data to the \texttt{sum\_q[i]} variable. While the copying should only cover the cases for which the multiplicand bit is equal to one, we have an extra functionality for the first bit of the multiplicand, which triggers the counter to start the addition process. Because of this, we require an additional instruction that also covers the case when \texttt{x[multiplicand[1]] = 0}.

\begin{minimalbox}
\INPUT {\tt x[multiplicand[1]] = 0 \AND x[to\_copy\_to\_sum\_q[1]] = 1}\\
\OUTPUT {\tt y[sum\_counter[1]] $\gets$ 1}\\
Instruction count: 1
\end{minimalbox}

\begin{minimalbox}
\INPUT {\tt x[multiplicand[i]] = 1 \AND x[to\_copy\_to\_sum\_q[i]] = 1}\\
\OUTPUT ($i = 1$) {\tt y[multiplicand[i]] = 1 \AND y[sum\_q[i]] $\gets$ 1 \AND\\\hspace*{71pt}y[sum\_counter[i]] $\gets$ 1}\\
\hspace*{40pt}($i > 1$) {\tt y[multiplicand[i]] = 1 \AND y[sum\_q[i]] $\gets$ 1}\\
Instruction count: $\ell$
\end{minimalbox}

\textbf{Instructions: add multiplicand to the running total}

For this operation, we define the same instructions that were defined in \Cref{app:addition} for the variables \texttt{p} and \texttt{q}, which have $2\ell$ bits in this context. These instructions cover the all the processes of bitwise addition, carry propagation, and counter update. By the end of the addition process, signalled by \texttt{x[sum\_counter[$2\ell$]] = 1}, we activate the shift process in the multiplicand and multiplier.

\begin{minimalbox}
\INPUT {\tt x[sum\_p[i]] = 1 \AND x[sum\_q[i]] = 0}\\
\OUTPUT {\tt y[sum\_p[i]] $\gets$ 1}\\
Instruction count: $2\ell$
\end{minimalbox}

\begin{minimalbox}
\INPUT {\tt x[sum\_p[i]] = 0 \AND x[sum\_q[i]] = 1}\\
\OUTPUT {\tt y[sum\_p[i]] $\gets$ 1}\\
Instruction count: $2\ell$
\end{minimalbox}

\begin{minimalbox}
\INPUT {\tt x[sum\_p[i]] = 1 \AND x[sum\_q[i]] = 1}\\
\OUTPUT {\tt y[sum\_c[i]] $\gets$ 1}\\
Instruction count: $2\ell$
\end{minimalbox}

\begin{minimalbox}
\INPUT {\tt x[sum\_c[i]] = 1}\\
\OUTPUT ($i < 2\ell$) {\tt y[sum\_q[i+1]] $\gets$ 1}\\
Instruction count: $2\ell-1$
\end{minimalbox}

\begin{minimalbox}
\INPUT {\tt x[sum\_counter[i]] = 1}\\
\OUTPUT ($i<4\ell$) {\tt y[sum\_counter[i+1]] $\gets$ 1}\\
\hspace*{38pt}($i=4\ell$) {\tt y[to\_shift\_right[\ALL]] $\gets$ 1 \AND y[to\_shift\_left[\ALL]] $\gets$ 1}\\
Instruction count: $4\ell$
\end{minimalbox}

\textbf{Instructions: shift multiplier to the right}

The purpose of this function is to perform the following behavior: when the \texttt{to\_shift\_right} flag is active, bit $i$ of \texttt{multiplier} is assigned the value of the previous bit. If \texttt{multiplier} is already 0, it is set to zero directly without further computation. An exception is made for $i=1$: it does not shift its value but triggers the \texttt{to\_check\_lsb} flag whenever \texttt{to\_shift\_right[1]} is active, regardless of the corresponding \texttt{multiplier} bit.

\begin{minimalbox}
\INPUT {\tt x[multiplier[i]] = 0 \AND x[to\_shift\_right[i]] = 1}\\
\OUTPUT ($i = 1$) {\tt y[to\_check\_lsb] $\gets$ 1}\\
Instruction count: $1$
\end{minimalbox}

\begin{minimalbox}
\INPUT {\tt x[multiplier[i]] = 1 \AND x[to\_shift\_right[i]] = 1}\\
\OUTPUT ($i = 1$) {\tt y[to\_check\_lsb] $\gets$ 1}\\
\hspace*{38pt}($i>1$)
{\tt y[multiplier[i-1]] $\gets$ 1}\\
Instruction count: $\ell$
\end{minimalbox}

\textbf{Instructions: shift multiplicand to the left}

The goal of these instructions is to execute the following Instructions: when the \texttt{to\_shift\_left} flag is active, shift the \texttt{multiplicand} by assigning each bit to the next lower-order position. If $i = 2\ell$, or if \texttt{multiplicand} is already 0, set the value to zero directly, as the shift is implicitly handled.

\begin{minimalbox}
\INPUT {\tt x[multiplicand[i]] = 1 \AND x[to\_shift\_left[i]] = 1}\\
\OUTPUT ($i<2\ell$) {\tt y[multiplicand[i+1]] $\gets$ 1}\\
Instruction count: $2\ell-1$
\end{minimalbox}

\subsection{General computation}
\label{app:sbn}

In this subsection, we present results that address the generality of the template matching approach previously described in \Cref{sec:instructions}.
To this end, we demonstrate that we can build a block structure and corresponding instructions to simulate a one-instruction set computer (OISC), thereby showing that we can execute any computable function, provided with the right instructions and memory values.
More specifically, in this proof, we represent an OISC with a single instruction called ``Subtract and branch if negative'' or SBN \cite{gilreath2003computer}. 

\textbf{SBN:} Named for its operation ``subtract and branch if negative", SBN is a one-instruction set computer.
One way to express SBN is detailed in \Cref{alg:sbn}, and it consists of subtracting the content at address $a$ from that at address $b$, and storing the result back at $b$.
All these values are stored in a memory array.
If the result is positive, the computer executes the next instruction; otherwise, it jumps to the instruction at address $c$. Despite this operational simplicity, SBN is Turing Complete \cite{gilreath2003computer}.

{\centering
\vspace{-1em}
\begin{minipage}{.7\linewidth}
\begin{algorithm}[H]
    \small
    \caption{SBN$\,(a,b,c)$}
    \label{alg:sbn}
    \begin{algorithmic}[1]
        \Require \textbf{Input:} memory object $M$, addresses $a, b, c$
        \State $M[b] \gets M[b] - M[a]$
        \If{$M[b] < 0$}
            \State \textbf{go to} $c$
        \Else
            \State \textbf{go to next instruction}
        \EndIf
    \end{algorithmic}
\end{algorithm}
\end{minipage}
\par
}

SBN is closely related to SUBLEQ (``Subtrach and branch if less than or equal to zero''), differing only by the strict inequality instead of the inequality of SUBLEQ. This approach is fairly popular, with SUBLEQ being widely used in other works to demonstrate Turing Completeness in the context of Transformers \cite{giannou23a, pmlr-v235-back-de-luca24a}.

To simulate SBN within our framework, several auxiliary functions must be implemented in addition to those listed in \Cref{alg:sbn}. These functions handle tasks such as retrieving an instruction from the list of instructions, accessing memory values from specified addresses, determining the next instruction based on the current one, and copying values between different fields, among other operations required by SBN.

The purpose of each function will become clear as their corresponding instructions are introduced. We will also explain the design choices behind them and provide the rationale for these decisions.

Before introducing the implementation details of our solution, we begin by outlining the structure we aim to simulate.

Our simulation involves two distinct objects: one for storing instructions and another for storing memory content. Conceptually, the instruction object can be viewed as a list of quadruples $(t, a, b, c)$, where $t$ is the address of the instruction (used for identification), and $a$, $b$, and $c$ are, respectively, two memory addresses and an instruction address. The triplet $(a, b, c)$ encodes the SBN instruction, as illustrated in \Cref{alg:sbn}.
Memory is structured as a list of pairs $(k, v)$, where $k$ denotes a memory address and $v$ its corresponding value.

For the input structure, following the framework outlined in \Cref{sec:instructions}, we divide the input into sets of blocks, each representing a large group of functions and variables. Each set of block is identified using the prefix specified in parentheses. The major blocks are organized as follows:

\begin{itemize}
\item \textbf{Instructions (I)}: Contains the list of instructions and their associated variables.
\item \textbf{Memory (M)}: Contains the list of memory elements and their associated variables.
\item \textbf{Branching (B)}: Handles the selection of the instruction based on the branching condition in line 2 of \Cref{alg:sbn}, as well as the computation of the next instruction address.
\item \textbf{Subtraction (D)}: Handles the subtraction of memory contents $a$ and $b$, as described in \Cref{alg:sbn}.
\end{itemize}

\begin{figure}[ht]
    \centering
    \includegraphics[width=\linewidth]{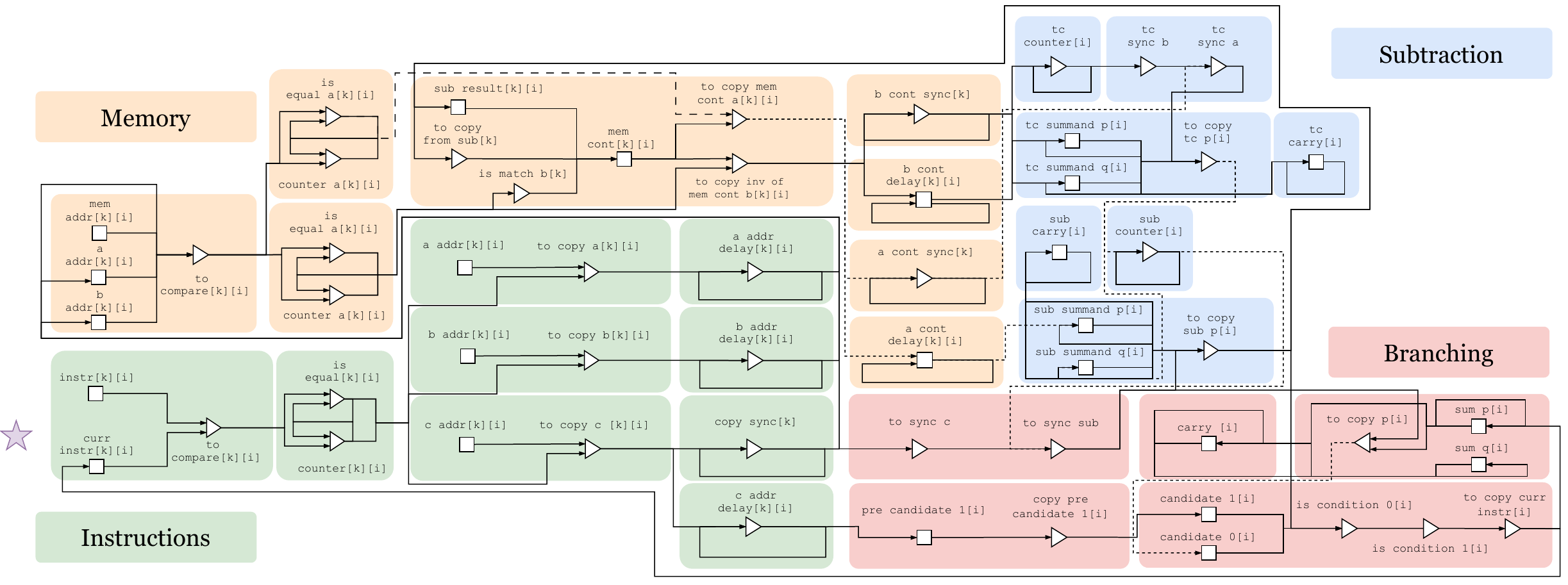}
    \caption{Overview of the simulation of SBN using the template-matching framework. Each rectangle in the diagram represents a block, with colors indicating specific block categories: Instructions, Memory, Branching, and Subtraction. Within each block are variables, depicted as shapes: squares represent data-holding variables, while triangles denote binary flags. Some blocks contain indexed variables. For simplicity, the sketch illustrates only one representative block for each unique index combination. Arrows between variables indicate interactions as defined by the instructions, representing the flow of data and control between variables and across different blocks.
    The star symbol on the left highlights blocks active at the start of an iteration. During this phase, each instruction is compared to the current one, triggering corresponding blocks on the right. An iterative bit-wise comparison of instruction addresses follows, and if a match occurs, copy flags for addresses $a$, $b$, and $c$ are triggered in the three adjacent right-hand blocks. The process continues according to the instructions detailed in the following sections.}
    \label{fig:sbn}
\end{figure}

Since both the instructions and memory objects contain multiple entries, we introduce additional notation. Let $\ell_I$ denote the number of instructions, which is assumed to be constant, as the number of instructions in an algorithm remains fixed, even though they may be executed repeatedly. Let $\ell_M$ denote the number of memory slots.

We define the number of bits required to address instructions as $n_I = \lceil\log_2 \ell_I\rceil$, and similarly, the number of bits required to address memory slots as $n_M = \lceil\log_2 \ell_M\rceil$. Additionally, each memory slot holds a value, whose bit-width we denote by $n_C$.
An overview of our solution is illustrated in \Cref{fig:sbn}.

In the following sections, we will derive the structure of the input $x$, using the quantities defined above to determine the number of blocks and the corresponding instructions.

\paragraph{Blocks:} The total number of blocks is
\begin{equation*}
    \ell_M(5n_M + 3n_C + 3) + \ell_I(4n_I + n_M + 1) + 4n_I + 8n_C + 2 = \mathcal{O}(\ell_M\log \ell_M),
\end{equation*}

and the total length of the input $x$ is given by:
\begin{equation*}
\ell_M(11n_M + 6n_C + 4) + \ell_I(8n_I + 4n_M + 1) + 11n_I + 12n_C + 4 = \mathcal{O}(\ell_M\log \ell_M).    
\end{equation*}

The structure of the sets of blocks is defined as follows.

\textbf{Instructions blocks:} for $k \in [\ell_I],\; i \in [n_I], \; j\in [n_M]$:

\begin{itemize}[leftmargin=1em]
\item {\small \tt (I\_to\_compare[k][i], I\_curr\_instr[k][i], I\_instr[k][i])}

\item {\small \tt (I\_is\_equal[k][i], I\_counter[k][i])}

\item {\small \tt (I\_a\_addr[k][j], I\_to\_copy\_a[k][j])}

\item {\small \tt (I\_b\_addr[k][j], I\_to\_copy\_b[k][j])}

\item {\small \tt (I\_c\_addr[k][i], I\_to\_copy\_c[k][i])}
\item {\small \tt (I\_a\_addr\_delay[k][j])} 
\item {\small \tt (I\_b\_addr\_delay[k][j])}
\item {\small \tt (I\_c\_addr\_delay[k][i])}
\item {\small \tt (I\_copy\_sync[k])}
\end{itemize}

\textbf{Memory blocks:} for $k \in [\ell_M], \; i \in [n_M], \; j\in [n_C]$

\begin{itemize}[leftmargin=1em]
   \item {\small \tt (M\_to\_compare[k][i], M\_a\_addr[k], M\_b\_addr[k][i], M\_mem\_addr[k][i])}
   
   \item {\small \tt (M\_is\_equal\_a[k][i], M\_counter\_a[k][i])}
   
   \item {\small \tt (M\_is\_equal\_b[k][i], M\_counter\_b[k][i])}

   \item {\small \tt (M\_mem\_cont[k][j], M\_to\_copy\_a\_mem\_cont[k][j], M\_to\_copy\_inv\_b\_mem\_cont[k][j], M\_is\_match\_b[k][j], M\_to\_copy\_from\_sub[k][j], M\_sub\_result[k][j])}
   \item {\small \tt (M\_a\_cont\_delay[k][i])}
   \item {\small \tt (M\_a\_cont\_sync[k])}
   \item {\small \tt (M\_b\_cont\_delay[k][i])}
   \item {\small \tt (M\_b\_cont\_sync[k])}
\end{itemize}
   
\textbf{Branching blocks:} for $i \in [n_I]$
\begin{itemize}[leftmargin=1em]
\item {\small \tt (B\_sum\_p[i], B\_sum\_q[i], B\_to\_copy\_p[i])}
\item {\small \tt (B\_carry[i])} 
\item {\small \tt (B\_to\_sync\_c, B\_to\_sync\_sub)}
\item {\small \tt (B\_pre\_candidate\_1[i], B\_to\_copy\_candidate\_1[i])}
\item {\small \tt (B\_candidate\_0[i], B\_candidate\_1[i], B\_is\_condition\_0[i], B\_is\_condition\_1[i] B\_to\_copy\_curr\_instr[i])}
 
\end{itemize}

\textbf{Subtraction blocks:} for $i \in [n_C], \; j\in [2n_C]$

\begin{itemize}[leftmargin=1em]
    \item {\small \tt (D\_tc\_p[i], D\_tc\_q[i], D\_to\_copy\_tc\_p[i])}
    \item {\small \tt (D\_tc\_sync\_a, D\_tc\_sync\_b)}
    \item {\small \tt (D\_tc\_carry[i])}
    \item {\small \tt (D\_tc\_counter[j])}
    \item {\small \tt (D\_sub\_carry[i])}
    \item {\small \tt (D\_sub\_p[i], D\_sub\_q[i], D\_to\_copy\_sub\_p[i])}
    \item {\small \tt (D\_sub\_counter[j])}    
\end{itemize}

\paragraph{Instructions}

Based on \Cref{alg:sbn} and the block structure outlined earlier, we now define the binary instructions for each process, grouped according to their corresponding block prefix.
Considering all the instructions presented below, the total number of instructions is
 \begin{equation*}
    \ell_M (13n_M + 7n_C + 1) + \ell_I(8n_I+ 4n_M + 2) + 14n_C + 11n_I + 9 = \mathcal{O}(\ell_M\log \ell_M).
 \end{equation*}

{\bf Instructions: persist addresses (I)}

The purpose of this function is to ensure that instruction addresses are not inadvertently deleted. Notably, we do not need to handle cases where the instruction address is zero, as any unmatched sample will naturally leave the corresponding entry as zero.

Additionally, note that there are no instructions dedicated to preserving the current instruction address {\tt I\_curr\_instr} or the comparison flag {\tt I\_to\_compare}. This omission is intentional, as both are temporary variables activated only in one execution stage and do not retain their values beyond that stage. Consequently, there is no need to explicitly preserve them.

\begin{minimalbox}
\small
\INPUT {\tt x[I\_instr[k][i]] = 1 \AND x[I\_to\_compare[k][i]] = 0 \AND\\ \hspace*{26pt} x[I\_curr\_instr[k][i]] = 0}\\  
\OUTPUT {\tt y[I\_instr[k][i]] $\gets$ 1}\\ 
Instruction count: $\ell_I\cdot n_I$
\end{minimalbox}

\begin{minimalbox}
\small
\INPUT {\tt x[I\_a\_addr[k][i]] = 1 \AND x[I\_to\_copy\_a[k][i]] = 0}\\
\OUTPUT {\tt y[I\_a\_addr[k][i]] $\gets$ 1}\\
Instruction count: $\ell_I\cdot n_M$
\end{minimalbox}

\begin{minimalbox}
\small
\INPUT {\tt x[I\_b\_addr[k][i]] = 1 \AND x[I\_to\_copy\_b[k][i]] = 0}\\
\OUTPUT {\tt y[I\_b\_addr[k][i]] $\gets$ 1}\\
Instruction count: $\ell_I\cdot n_M$
\end{minimalbox}

\begin{minimalbox}
\small
\INPUT {\tt x[I\_c\_addr[k][i]] = 1 \AND x[I\_to\_copy\_c[k][i]] = 0}\\
\OUTPUT {\tt y[I\_c\_addr[k][i]] $\gets$ 1}\\
Instruction count: $\ell_I\cdot n_I$
\end{minimalbox}

{\bf Instructions: compare addresses (I)}

In this function, the goal is to compare the current instruction address with the $k$-th instruction address, bit by bit. The result of this bitwise comparison is stored in a dedicated variable ({\tt I\_is\_equal}), and a counter is activated to trigger a process that verifies whether all bits match. The set of instructions defined below handles both possible outcomes: when the addresses match and when they do not.

\begin{minimalbox}
\small
\INPUT {\tt x[I\_to\_compare[k][i]] = 1 \AND x[I\_curr\_instr[k][i]] = x[I\_instr[k][i]]}\\
\OUTPUT {\tt ($i = 1$) y[I\_instr[k][i]] $\gets$ x[I\_instr[k][i]] \AND  y[I\_is\_equal[k][i]] $\gets$ 1 \\\hspace*{70pt}\AND y[I\_counter[k][i]] $\gets$ 1}\\
\hspace*{40pt}($i > 1$) {\tt y[I\_instr[k][i]] $\gets$ x[I\_instr[k][i]] \AND  y[I\_is\_equal[k][i]] $\gets$ 1}\\
Instruction count: $2\ell_I\cdot n_I$
\end{minimalbox}

\begin{minimalbox}
\small
\INPUT {\tt x[I\_to\_compare[k][i]] = 1 \AND x[I\_curr\_instr[k][i]] $\neq$ x[I\_instr[k][i]]}\\
\OUTPUT ($i = 1$) {\tt y[I\_instr[k][i]] $\gets$ x[I\_instr[k][i]] \AND y[I\_counter[k][i]] $\gets$ 1}\\
\hspace*{36pt}($i > 1$) {\tt y[I\_instr[k][i]] $\gets$ x[I\_instr[k][i]]}\\
Instruction count: $2\cdot\ell_I\cdot n_I$
\end{minimalbox}

{\bf Instructions: check full address match and trigger copy (I)}

Following the previous stage, each of the comparison flags is evaluated iteratively using the counter variables. If all of them are equal to one, this indicates that the current instruction address exactly matches the $k$-th instruction address, which activates the copy flags. In the case where the {\tt I\_is\_equal} variable is activated, but its corresponding {\tt I\_counter} has not yet been triggered, we preserve the value of {\tt I\_is\_equal} using the second instruction.

\begin{minimalbox}
\small
\INPUT {\tt x[I\_counter[k][i]] = 1 \AND x[I\_is\_equal[k][i]] = 1}\\
\OUTPUT {\tt ($i < n_I$)  y[I\_counter[k][i+1]] $\gets$ 1}\\
\hspace*{38pt}($i = n_I$) 
    {\tt y[I\_to\_copy\_a[k][\ALL]] $\gets$ 1 \AND\\ \hspace*{67pt} y[I\_to\_copy\_b[k][\ALL]] $\gets$ 1 \AND\\ \hspace*{67pt} y[I\_to\_copy\_c[k][\ALL]] $\gets$ 1}
    \\
Instruction count: $\ell_I\cdot n_I$
\end{minimalbox}

\begin{minimalbox}
\small
\INPUT {\tt x[I\_counter[k][i]] = 0 \AND x[I\_is\_equal[k][i]] = 1}\\
\OUTPUT {\tt y[I\_is\_equal[k][i]] $\gets$ 1}\\
Instruction count: $\ell_I\cdot n_I$
\end{minimalbox}

{\bf Instructions: copy address $a$, $b$ and $c$ (I)}

The goal of this function is to copy the matching $k$-th instruction triple ($a$, $b$, $c$) into their respective blocks. 
This is achieved through two distinct sets of instructions. The first set defines the conditions under which copying should occur, while the second -- denoted with the {\tt \_delay} and {\tt \_sync} suffix -- is responsible for propagating the copy.

Strictly following the framework in \Cref{sec:instructions}, this second instruction set is technically unnecessary. One could, in principle, use the output from the final stage of the delayed process as the direct output of the first set. 
However, we adopt this two-stage implementation due to the nature of Neural Tangent Kernels (NTKs) and the challenge of managing write conflicts. If the alternative (single-stage) approach were used, the number of instructions writing to the same coordinates would increase with the number of instructions in the program, thereby leading to a proportional increase in write conflicts.

To mitigate this, we introduce a delay structure that propagates information sequentially across the instruction items. This design ensures that the number of write conflicts remains constant, regardless of the program size. While this approach incurs additional computational cost -- in the form of more blocks and iterations -- it does not hinder learnability, as discussed in \Cref{app:learnability}.

Additionally, since not all relevant variables are guaranteed to be set to 1, we introduce a supplementary variable for each, prefixed by {\tt sync}. These {\tt sync} variables function similarly to their {\tt delay} counterparts but are always set to 1. This allows them to serve as a reliable synchronization mechanism across different processes.

\begin{minimalbox}
\small
\INPUT {\tt x[I\_to\_copy\_a[k][i]] = 1 \AND x[I\_a\_addr[k][i]] = 1}\\
\OUTPUT {\tt y[I\_a\_addr\_delay[k][i]] $\gets$ 1 \AND y[I\_a\_addr[k][i]] $\gets$ 1}\\
Instruction count: $\ell_I\cdot n_M$
\end{minimalbox}

\begin{minimalbox}
\small
\INPUT {\tt x[I\_to\_copy\_b[k][i]] = 1 \AND x[I\_b\_addr[k][i]] = 1}\\
\OUTPUT {\tt y[I\_b\_addr\_delay[k][i]] $\gets$ 1 \AND y[I\_b\_addr[k][i]] $\gets$ 1}\\
Instruction count: $\ell_I\cdot n_M$
\end{minimalbox}

\begin{minimalbox}
\small
\INPUT {\tt x[I\_to\_copy\_c[k][i]] = 1 \AND x[I\_c\_addr[k][i]] = 1}\\
\OUTPUT ($i=1$) {\tt y[I\_c\_addr[k][i]] $\gets$ 1 \AND y[I\_c\_addr\_delay[k][i]] $\gets$ 1 \AND \\\hspace*{60pt}  y[I\_copy\_sync[k]] $\gets$ 1}\\
\hspace*{36pt}($i>1$) {\tt y[I\_c\_addr[k][i]] $\gets$ 1 \AND y[I\_c\_addr\_delay[k][i]] $\gets$ 1}\\
Instruction count: $\ell_I\cdot n_I$
\end{minimalbox}

\begin{minimalbox}
\small
\INPUT {\tt x[I\_a\_addr\_delay[k][i]] = 1}\\
\OUTPUT {\tt ($k < \ell_I$) y[I\_a\_addr\_delay[k+1][i]] $\gets$ 1}\\
\hspace*{38pt}($k = \ell_I$) {\tt\,  y[M\_a\_addr[\ALL][i]] $\gets$ 1}\\
Instruction count: $\ell_I\cdot n_M$
\end{minimalbox}

\begin{minimalbox}
\small
\INPUT {\tt x[I\_b\_addr\_delay[k][i]] = 1}\\
\OUTPUT {\tt ($k < \ell_I$)  y[I\_b\_addr\_delay[k+1][i]] $\gets$ 1}\\
\hspace*{38pt}($k = \ell_I$) {\tt\,  y[M\_b\_addr][\ALL][i] $\gets$ 1}\\
Instruction count: $\ell_I\cdot n_M$
\end{minimalbox}

\begin{minimalbox}
\small
\INPUT {\tt x[I\_c\_addr\_delay[k][i]] = 1}\\
\OUTPUT {\tt ($k < \ell_I$)  y[I\_c\_addr\_delay[k+1][i]] $\gets$ 1}\\
\hspace*{38pt}($k = \ell_I$) {\tt\,  y[B\_pre\_candidate\_1[i]] $\gets$ 1}\\
Instruction count: $\ell_I\cdot n_I$
\end{minimalbox}

\begin{minimalbox}
\small
\INPUT {\tt x[I\_copy\_sync[k]] = 1}\\
\OUTPUT {\tt ($k < \ell_I$)  y[I\_copy\_sync[k+1]] $\gets$ 1}\\
\hspace*{38pt}($k = \ell_I$) {\tt\,  y[M\_to\_compare][\ALL][\ALL] $\gets$ 1 \AND y[B\_to\_sync\_c] $\gets$ 1}\\
Instruction count: $\ell_I$
\end{minimalbox}

{\bf Instructions: persist addresses (M)}

As with the instruction block, the memory addresses must also be persisted. For the input block below, there is no need to handle scenarios where the other variables are equal to 1, as those cases are either already addressed during the comparison stage or do not arise during execution.

\begin{minimalbox}
\small
\INPUT {\tt x[M\_to\_compare[k][i]] = 0 \AND x[M\_mem\_addr[k][i]] = 1 \AND\\ x[M\_a\_addr[k][i]] = 0 \AND x[M\_b\_addr[k][i]] = 0}\\
\OUTPUT {\tt y[M\_mem\_addr[k][i]] $\gets$ 1}\\
Instruction count: $\ell_M\cdot n_M$
\end{minimalbox}

{\bf Instructions: compare addresses (M)}

In this stage, each memory address $k$ is compared to the addresses stored in the instruction fields $a$ and $b$. Note that the addresses from $a$ and $b$ have already been transmitted to their corresponding memory blocks. Below, we describe the behavior for different combinations of the values of {\tt M\_a\_addr}, {\tt M\_b\_addr}, and {\tt M\_mem\_addr}.

\begin{minimalbox}
\small
\INPUT {\tt x[M\_to\_compare[k][i]] = 1 \AND\\ \hspace*{30pt}x[M\_a\_addr[k][i]] = x[M\_mem\_addr[k][i]] \AND\\ \hspace*{30pt}x[M\_b\_addr[k][i]] = x[M\_mem\_addr[k][i]]}\\
\OUTPUT ($i = 1$) {\tt 
y[M\_is\_equal\_a[k][i]] $\gets$ 1 \AND y[M\_is\_equal\_b[k][i]] $\gets$ 1 \AND\\
\hspace*{64pt}y[M\_counter\_a[k][i]] $\gets$ 1 \AND y[M\_counter\_b[k][i]] $\gets$ 1 \AND\\
\hspace*{64pt}y[M\_mem\_addr[k][i]] $\gets$ x[M\_mem\_addr[k][i]]}\\
\hspace*{35pt}($i> 1$) {\tt 
y[M\_is\_equal\_a[k][i]] $\gets$ 1 \AND y[M\_is\_equal\_b[k][i]] $\gets$ 1 \AND\\
\hspace*{64pt}y[M\_mem\_addr[k][i]] $\gets$ x[M\_mem\_addr[k][i]]}\\
Instruction count: $2\ell_M\cdot n_M$
\end{minimalbox}

\newpage
\begin{minimalbox}
\small
\INPUT {\tt x[M\_to\_compare[k][i]] = 1 \AND\\ \hspace*{30pt}x[M\_a\_addr[k][i]] $\neq$ x[M\_mem\_addr[k][i]] \AND\\ \hspace*{30pt}x[M\_b\_addr[k][i]] = x[M\_mem\_addr[k][i]]}\\
\OUTPUT ($i = 1$) {\tt 
y[M\_is\_equal\_b[k][i]] $\gets$ 1 \AND y[M\_counter\_b[k][i]] $\gets$ 1 \AND\\
\hspace*{64pt}y[M\_mem\_addr[k][i]] $\gets$ x[M\_mem\_addr[k][i]]}\\
\hspace*{35pt}($i> 1$) {\tt 
 y[M\_is\_equal\_b[k][i]] $\gets$ 1 \AND\\
 \hspace*{64pt}y[M\_mem\_addr[k][i]] $\gets$ x[M\_mem\_addr[k][i]]}\\
Instruction count: $2\ell_M\cdot n_M$
\end{minimalbox}

\begin{minimalbox}
\small
\INPUT {\tt x[M\_to\_compare[k][i]] = 1 \AND\\ \hspace*{30pt}x[M\_a\_addr[k][i]] = x[M\_mem\_addr[k][i]] \AND\\ \hspace*{30pt}x[M\_b\_addr[k][i]] $\neq$ x[M\_mem\_addr[k][i]]}\\
\OUTPUT ($i = 1$) {\tt 
y[M\_is\_equal\_a[k][i]] $\gets$ 1 \AND y[M\_counter\_a[k][i]] $\gets$ 1 \AND\\
\hspace*{64pt}y[M\_mem\_addr[k][i]] $\gets$ x[M\_mem\_addr[k][i]]}\\
\hspace*{35pt}($i> 1$) {\tt 
 y[M\_is\_equal\_a[k][i]] $\gets$ 1 \AND\\
 \hspace*{64pt}y[M\_mem\_addr[k][i]] $\gets$ x[M\_mem\_addr[k][i]]}\\
Instruction count: $2\ell_M\cdot n_M$
\end{minimalbox}

\begin{minimalbox}
\small
\INPUT {\tt x[M\_to\_compare[k][i]] = 1 \AND\\ \hspace*{30pt} x[M\_a\_addr[k][i]] $\neq$ x[M\_mem\_addr[k][i]] \AND\\ \hspace*{30pt} x[M\_b\_addr[k][i]] $\neq$ x[M\_mem\_addr[k][i]]}\\
\OUTPUT {\tt x[M\_mem\_addr[k][i]] $\gets$ x[M\_mem\_addr[k][i]]}\\
Instruction count: $2\ell_M\cdot n_M$
\end{minimalbox}

{\bf Instructions: check full address match and trigger copy (M)}

After computing bitwise equalities for each address and each address bit, the comparison flags for both $a$ and $b$ are checked iteratively. If all bits in a given comparison are equal to one, this indicates a match with the corresponding memory address, and the relevant processes are activated.

In the case of a match with address $a$, this triggers the process of copying the content of memory at address $a$ to the subtraction block. For a match with address $b$, a similar copying operation is triggered. However, instead of copying the bits directly, the inverse of each bit is copied. This serves to negate the content of $b$, as required by the subtraction logic, which will be explained in the following instructions.

Additionally, when a match is found in $b$, another flag is activated to indicate which memory slot corresponds to the current match. This flag is used later to update that memory slot with the result of the subtraction. 

\begin{minimalbox}
\small
\INPUT {\tt x[M\_counter\_a][k][i] = 1 \AND x[M\_is\_equal\_a][k][i] = 1}\\
\OUTPUT {\tt ($i < n_M$) y[M\_counter\_a][k][i+1] $\gets$ 1}\\
\hspace*{38pt}($i = n_M$) {\tt y[M\_to\_copy\_a\_mem\_cont][k][\ALL] $\gets$ 1}\\
Instruction count: $\ell_M\cdot n_M$
\end{minimalbox}

\begin{minimalbox}
\small
\INPUT {\tt x[M\_counter\_a][k][i] = 0 \AND x[M\_is\_equal\_a][k][i] = 1}\\
\OUTPUT {\tt y[M\_is\_equal\_a][k][i] $\gets$ 1}\\
Instruction count: $\ell_M\cdot n_M$
\end{minimalbox}

\begin{minimalbox}
\small
\INPUT {\tt x[M\_counter\_b][k][i] = 1 \AND x[M\_is\_equal\_b][k][i] = 1}\\
\OUTPUT ($i < n_M$) {\tt y[M\_counter\_b][k][i+1] $\gets$ 1}\\
\hspace*{37pt}($i = n_M$) {\tt y[M\_to\_copy\_inv\_b\_mem\_cont][k][\ALL] $\gets$ 1 \AND\\ \hspace*{74pt}y[M\_is\_match\_b][k][\ALL] $\gets$ 1}\\
Instruction count: $\ell_M\cdot n_M$
\end{minimalbox}

\begin{minimalbox}
\small
\INPUT {\tt x[M\_counter\_b][k][i] = 0 \AND x[M\_is\_equal\_b][k][i] = 1}\\
\OUTPUT {\tt y[M\_is\_equal\_b][k][i] $\gets$ 1}\\
Instruction count: $\ell_M\cdot n_M$
\end{minimalbox}

{\bf Instructions: copy memory content (M)}

The following instructions cover all combinations of flags and memory content values. In this setting, we ensure that every valid combination of memory content and flag activation is captured. The transmission of memory information to the appropriate targets follows the same delay structure previously described, which helps manage write conflicts.

Importantly, the memory content is preserved in all cases, and the {\tt M\_is\_match\_b} flag is retained to indicate the corresponding memory slot for later updates to memory $b$.

Additionally, we must account for cases in which the memory content is null. This is necessary due to the flag {\tt M\_to\_copy\_inv\_b\_mem\_cont}, which signals that the inverse of the memory content should be copied. As a result, its effect only arises when {\tt M\_mem\_cont} is zero, whereas the effects of other flags are triggered when the memory content is non-zero.

\begin{minimalbox}
\small
\INPUT {\tt x[M\_mem\_cont][k][i] = 1 \AND x[M\_to\_copy\_a\_mem\_cont][k][i] = 1 \AND \\
\hspace*{26pt} x[M\_to\_copy\_inv\_b\_mem\_cont][k][i] = 1 \AND x[M\_is\_match\_b][k][i] = 1 \AND\\
\hspace*{26pt} x[M\_to\_copy\_from\_sub][k][i] = 0 \AND x[M\_sub\_result][k][i] = 0 }\\
\OUTPUT {\tt ($i = 1$) y[M\_mem\_cont][k][i] $\gets$ 1 \AND y[M\_is\_match\_b][k][i] $\gets$ 1 \AND\\
\hspace*{65pt} y[M\_a\_cont\_delay][k][i] $\gets$ 1 \AND y[M\_b\_cont\_sync][k] $\gets$ 1 \AND\\ \hspace*{65pt} y[M\_a\_cont\_sync][k] $\gets$ 1}\\
\hspace*{37pt}($i > 1$) {\tt y[M\_mem\_cont][k][i] $\gets$ 1 \AND y[M\_is\_match\_b][k][i] $\gets$ 1 \AND\\ \hspace*{60pt} y[M\_a\_cont\_delay][k][i] $\gets$ 1} \\
Instruction count: $\ell_M\cdot n_C$
\end{minimalbox}

\begin{minimalbox}
\small
\INPUT {\tt x[M\_mem\_cont][k][i] = 0 \AND  x[M\_to\_copy\_a\_mem\_cont][k][i] = 1 \AND\\
       \hspace*{26pt} x[M\_to\_copy\_inv\_b\_mem\_cont][k][i] = 1 \AND
       x[M\_is\_match\_b][k][i] = 1 \AND \\ \hspace*{26pt} x[M\_to\_copy\_from\_sub][k][i] = 0 \AND
       x[M\_sub\_result][k][i] = 0}\\
\OUTPUT ($i = 1$) {\tt y[M\_is\_match\_b][k][i] $\gets$ 1 \AND y[M\_b\_cont\_delay][k][i] $\gets$ 1 \AND\\ \hspace*{59pt} y[M\_b\_cont\_sync][k] $\gets$ 1 \AND y[M\_a\_cont\_sync][k] $\gets$ 1}\\
\hspace*{35pt}($i > 1$) {\tt y[M\_is\_match\_b][k][i] $\gets$ 1 \AND y[M\_b\_cont\_delay][k][i] $\gets$ 1}\\
Instruction count: $\ell_M\cdot n_C$
\end{minimalbox}

\begin{minimalbox}
\small
\INPUT {\tt x[M\_mem\_cont][k][i] = 0 \AND x[M\_to\_copy\_a\_mem\_cont][k][i] = 0 \AND\\
       \hspace*{26pt} x[M\_to\_copy\_inv\_b\_mem\_cont][k][i] = 1 \AND
       x[M\_is\_match\_b][k][i] = 1 \AND \\ \hspace*{26pt} x[M\_to\_copy\_from\_sub][k][i] = 0 \AND
       x[M\_sub\_result][k][i] = 0}\\
\OUTPUT ($i = 1$) {\tt y[M\_is\_match\_b][k][i] $\gets$ 1 \AND y[M\_b\_cont\_delay][k][i] $\gets$ 1 \AND\\ \hspace*{59pt} y[M\_b\_cont\_sync][k] $\gets$ 1}\\
\hspace*{35pt}($i > 1$) {\tt y[M\_is\_match\_b][k][i] $\gets$ 1 \AND y[M\_b\_cont\_delay][k][i] $\gets$ 1}\\
Instruction count: $\ell_M\cdot n_C$
\end{minimalbox}

\begin{minimalbox}
\small
\INPUT {\tt x[M\_mem\_cont][k][i] = 1 \AND 
x[M\_to\_copy\_a\_mem\_cont][k][i] = 0 \AND\\
\hspace*{26pt} x[M\_to\_copy\_inv\_b\_mem\_cont][k][i] = 1 \AND x[M\_is\_match\_b][k][i] = 1 \AND\\
\hspace*{26pt} x[M\_to\_copy\_from\_sub][k][i] = 0 \AND
x[M\_sub\_result][k][i] = 0}\\
\OUTPUT ($i = 1$) {\tt y[M\_mem\_cont][k][i] $\gets$ 1 \AND y[M\_is\_match\_b][k][i] $\gets$ 1 \AND\\
\hspace*{59pt} y[M\_b\_cont\_sync][k] $\gets$ 1}\\
\hspace*{37pt}($i > 1$) {\tt y[M\_mem\_cont][k][i] $\gets$ 1 \AND y[M\_is\_match\_b][k][i] $\gets$ 1}\\
Instruction count: $\ell_M\cdot n_C$
\end{minimalbox}

\begin{minimalbox}
\small
\INPUT {\tt x[M\_mem\_cont][k][i] = 1 \AND  x[M\_to\_copy\_a\_mem\_cont][k][i] = 1 \AND\\
\hspace*{26pt} x[M\_to\_copy\_inv\_b\_mem\_cont][k][i] = 0 \AND x[M\_is\_match\_b][k][i] = 0 \AND\\ \hspace*{26pt} x[M\_to\_copy\_from\_sub][k][i] = 0 \AND
x[M\_sub\_result][k][i] = 0 }\\
\OUTPUT {\tt y[M\_mem\_cont][k][i] $\gets$ 1 \AND y[M\_a\_cont\_delay][k][i] $\gets$ 1 \AND\\ \hspace*{31pt} y[M\_a\_cont\_sync][k] $\gets$ 1}\\
Instruction count: $\ell_M\cdot n_C$
\end{minimalbox}

\begin{minimalbox}
\small
\INPUT {\tt x[M\_a\_cont\_delay[k][i]] = 1}\\
\OUTPUT {\tt ($k < \ell_I$)  y[M\_a\_cont\_delay[k+1][i]] $\gets$ 1}\\
\hspace*{38pt}($k = \ell_M$) {\tt\,  y[D\_sub\_p[i]] $\gets$ 1}\\
Instruction count: $\ell_M\cdot n_C$
\end{minimalbox}

\begin{minimalbox}
\small
\INPUT {\tt x[M\_b\_cont\_delay[k][i]] = 1}\\
\OUTPUT {\tt ($k < \ell_I$)  y[M\_b\_cont\_delay[k+1][i]] $\gets$ 1}\\
\hspace*{38pt}($k = \ell_M$) {\tt y[D\_tc\_p[i]] $\gets$ 1}\\
Instruction count: $\ell_M\cdot n_C$
\end{minimalbox}

\begin{minimalbox}
\small
\INPUT {\tt x[M\_a\_cont\_sync[k]] = 1}\\
\OUTPUT {\tt ($k < \ell_M$)  y[M\_a\_cont\_sync[k+1]] $\gets$ 1}\\
\hspace*{38pt}($k = \ell_M$) {\tt\,  y[D\_tc\_sync\_a] $\gets$ 1}\\
Instruction count: $\ell_M$
\end{minimalbox}

\begin{minimalbox}
\small
\INPUT {\tt x[M\_b\_cont\_sync[k]] = 1}\\
\OUTPUT {\tt ($k < \ell_M$)  y[M\_b\_cont\_sync[k+1]] $\gets$ 1}\\
\hspace*{38pt}($k = \ell_M$) {\tt\,  y[D\_tc\_q[1]] $\gets$ 1 \AND y[D\_tc\_counter[1]] $\gets$ 1}\\
Instruction count: $\ell_M$
\end{minimalbox}

{\bf Instructions: negate memory content in $b$ (D)}

This process takes place after the memory content has been effectively copied. The overall goal of the subtraction blocks is to compute the difference between the memory contents at addresses $a$ and $b$. To achieve this, we adopt a two-step procedure.

Before describing the procedure, we clarify that memory content is represented using two’s complement encoding. Specifically, the least significant $n_C - 1$ bits represent the magnitude, while the most significant bit stores the sign.

Given this representation, subtraction is implemented by negating the content at address $b$ and then adding it to the content at address $a$.

In the previous step, the content from address $a$ was forwarded to a holding stage, awaiting the negated result of $b$. Meanwhile, we compute the two’s complement negation of $b$ by first taking its bitwise inverse and then adding 1. The instructions below implement this stage, covering each operation involved in bitwise inversion, addition, carry propagation, and counter updates. These instructions follow the same structure described in \Cref{app:addition}, but are limited to $n_C$ bits, meaning the final carry (MSB) is not propagated as an additional bit as it was done in \Cref{app:addition}.

After $2n_C$ iterations, the addition is completed. The final counter triggers the copy flags, which forward the result to the same staging area as the content of $a$. In the second step, we sum these two values, yielding the desired result: $M[a] - M[b]$.

\begin{minimalbox}
\small
\INPUT {\tt x[D\_tc\_p[i]] = 1 \AND x[D\_tc\_q[i]] = 0 \AND x[D\_to\_copy\_tc\_p[i]] = 0}\\
\OUTPUT {\tt y[D\_tc\_p[i]] $\gets$ 1}\\
Instruction count: $n_C$
\end{minimalbox}

\begin{minimalbox}
\small
\INPUT {\tt x[D\_tc\_p[i]] = 0 \AND x[D\_tc\_q[i]] = 1 \AND x[D\_to\_copy\_tc\_p[i]] = 0}\\
\OUTPUT {\tt y[D\_tc\_p[i]] $\gets$ 1}\\
Instruction count: $n_C$
\end{minimalbox}

\begin{minimalbox}
\small
\INPUT {\tt x[D\_tc\_p[i]] = 1 \AND x[D\_tc\_q[i]] = 1 \AND x[D\_to\_copy\_tc\_p[i]] = 0}\\
\OUTPUT {\tt y[D\_tc\_carry[i]] $\gets$ 1}\\
Instruction count: $n_C$
\end{minimalbox}

\begin{minimalbox}
\small
\INPUT {\tt x[D\_tc\_carry[i]] = 1}\\
\OUTPUT ($i< n_C$) {\tt y[D\_tc\_q[i+1]] $\gets$ 1}\\
Instruction count: $n_C-1$
\end{minimalbox}

\begin{minimalbox}
\small
\INPUT {\tt x[D\_tc\_counter[i]] = 1}\\
\OUTPUT ($i< 2n_C$) {\tt y[D\_tc\_counter[i+1]] $\gets$ 1}\\
\hspace*{37pt}($i= 2n_C$) {\tt y[D\_tc\_sync\_b] $\gets$ 1}\\
Instruction count: $2n_C$
\end{minimalbox}

{\bf Instructions: synchronize and trigger copy of negated content of $b$ (D)}

In this operation, the copy is triggered only after confirming that both processes (negating the content of $b$ and copying the content of $a$) have been completed. This ensures no operation begins before all necessary inputs are available at their designated locations. If either of the two flags has not yet been activated, we preserve the current values until both are ready.

\begin{minimalbox}
\small
\INPUT {\tt x[D\_tc\_sync\_a] = 1 \AND x[D\_tc\_sync\_b] = 1}\\
\OUTPUT {\tt y[D\_to\_copy\_tc\_p][\ALL] $\gets$ 1}\\
Instruction count: $1$
\end{minimalbox}

\begin{minimalbox}
\small
\INPUT {\tt x[D\_tc\_sync\_a] = 1 \AND x[D\_tc\_sync\_b] = 0}\\
\OUTPUT {\tt y[D\_tc\_sync\_a] = 1}\\
Instruction count: $1$
\end{minimalbox}

\begin{minimalbox}
\small
\INPUT {\tt x[D\_tc\_sync\_a] = 0 \AND x[D\_tc\_sync\_b] = 1}\\
\OUTPUT {\tt y[D\_tc\_sync\_b] = 1}\\
Instruction count: $1$
\end{minimalbox}

{\bf Instructions: copy negated content of $b$ (D)}

Once the {\tt D\_to\_copy\_tc\_p} flag is activated, the content of {\tt D\_tc\_p}, which encodes the negation of the content of $b$, is copied to {\tt D\_sub\_q}. This is the location where it will later be summed with the content of $a$.

In the following set of instructions, we do not include a case for when {\tt D\_tc\_q} is 1, since after $2n_C$ iterations, {\tt D\_tc\_q} is guaranteed to be zero.

\begin{minimalbox}
\small
\INPUT {\tt x[D\_tc\_p[i]] = 1 \AND x[D\_tc\_q[i]] = 0 \AND x[D\_to\_copy\_tc\_p[i]] = 1}\\
\OUTPUT ($i= 1$) {\tt y[D\_sub\_q[i]] = 1 \AND y[D\_sub\_counter][1] $\gets$ 1}\\
\hspace*{37pt}($i> 1$) {\tt y[D\_sub\_q[i]] = 1}\\
Instruction count: $n_C$
\end{minimalbox}

{\bf Instructions: add the content of $a$ to the negated content of $b$ (D)}

Once the negated content of $b$ has been copied to the same stage as the content of $a$, the final subtraction result is obtained by performing a simple summation. The instructions below follow the same addition structure described in \Cref{app:addition}. After completing $2n_C$ iterations, we trigger the sync flag to proceed to the next stage.

\begin{minimalbox}
\small
\INPUT {\tt x[D\_sub\_p[i]] = 1 \AND x[D\_sub\_q[i]] = 0 \AND x[D\_to\_copy\_sub\_p[i]] = 0}\\
\OUTPUT {\tt y[D\_sub\_p[i]] $\gets$ 1}\\
Instruction count: $n_C$
\end{minimalbox}

\begin{minimalbox}
\small
\INPUT {\tt x[D\_sub\_p[i]] = 0 \AND x[D\_sub\_q[i]] = 1 \AND x[D\_to\_copy\_sub\_p[i]] = 0}\\
\OUTPUT {\tt y[D\_sub\_p[i]] $\gets$ 1}\\
Instruction count: $n_C$
\end{minimalbox}

\begin{minimalbox}
\small
\INPUT {\tt x[D\_sub\_p[i]] = 1 \AND x[D\_sub\_q[i]] = 1 \AND x[D\_to\_copy\_sub\_p[i]] = 0}\\
\OUTPUT {\tt y[D\_sub\_carry][i] $\gets$ 1}\\
Instruction count: $n_C$
\end{minimalbox}

\begin{minimalbox}
\small
\INPUT {\tt x[D\_sub\_carry][i] = 1}\\
\OUTPUT ($i< n_C$) {\tt y[D\_sub\_q][i+1] $\gets$ 1}\\
Instruction count: $n_C-1$
\end{minimalbox}

\begin{minimalbox}
\small
\INPUT {\tt x[D\_sub\_counter][i] = 1}\\
\OUTPUT ($i< 2n_C$) {\tt y[D\_sub\_counter][i+1] $\gets$ 1}\\
\hspace*{37pt}($i = 2n_C$) {\tt y[B\_to\_sync\_sub] $\gets$ 1}\\
Instruction count: $2n_C$
\end{minimalbox}

{\bf Instructions: copy subtraction result (D)}

At this stage, the flag {\tt D\_to\_copy\_sub\_p} indicates that the content of {\tt D\_sub\_p}, which holds the result of the subtraction, should be copied to all memory slots labeled as {\tt M\_sub\_result}. This update is carried out using the matching flag {\tt M\_is\_match\_b} and the copy trigger {\tt M\_to\_copy\_from\_sub}, as specified in \Cref{alg:sbn}, in order to update the memory content at address $b$.

In the following set of instructions, we omit the case where {\tt D\_sub\_q} is 1, since after $2n\_C$ iterations this variable should always be zero.

We also implement the condition from \Cref{alg:sbn} used to determine the next instruction. By checking the most significant bit (MSB) of {\tt D\_sub\_p}, which represents the sign of the result, we decide the next step: if the MSB is 1, the result is negative and {\tt B\_is\_condition\_1} is activated; otherwise, {\tt B\_is\_condition\_0} is triggered.

\begin{minimalbox}
\small
\INPUT {\tt x[D\_sub\_p[i]] = 1 \AND x[D\_sub\_q[i]] = 0 \AND x[D\_to\_copy\_sub\_p[i]] = 1}\\
\OUTPUT ($i< n_C$) {\tt y[M\_sub\_result[\ALL][i]] $\gets$ 1}\\
\hspace*{37pt}($i = n_C$) {\tt y[B\_is\_condition\_1[\ALL]]\,$\gets$\,1 \AND  y[M\_sub\_result[\ALL][i]] $\gets$\,1 \AND \\ \hspace*{68pt} y[M\_to\_copy\_from\_sub[\ALL][\ALL]]\,$\gets$\,1}\\
Instruction count: $n_C$
\end{minimalbox}

\begin{minimalbox}
\small
\INPUT {\tt x[D\_sub\_p[i]] = 0 \AND x[D\_sub\_q[i]] = 0 \AND x[D\_to\_copy\_sub\_p[i]] = 1}\\
\OUTPUT ($i = n_C$) {\tt  y[B\_is\_condition\_0[\ALL]] $\gets$ 1 \AND\\
\hspace*{67pt} y[M\_to\_copy\_from\_sub[\ALL][\ALL]] $\gets$ 1}\\
Instruction count: $1$
\end{minimalbox}

{\bf Instructions: increment current instruction address (B)}

This set of instructions computes the address corresponding to the \emph{else} condition in the branching logic of \Cref{alg:sbn}. Specifically, it calculates the address of the \emph{next instruction}, denoted by {\tt candidate\_0}, based on the current instruction address. The computation follows the same addition structure described in \Cref{app:addition}.

By default, the initial vector $\vect{\hat{x}}$ is configured such that the first current instruction is the all-zero vector, and the variable {\tt x[B\_sum\_p][1]} is set to 1. This setup ensures that the algorithm always begins with the first instruction. After this initialization, the subsequent iterations proceed according to the logic defined by the instruction set. During the selection of the instruction address based on the branching condition, the chosen address is also copied to a scratchpad area, which is then used to compute $k+1$. Once the $k+1$ address is calculated, it is stored in the {\tt B\_sum\_p} bits and retained until the appropriate copy flag is activated.

\begin{minimalbox}
\small
\INPUT {\tt x[B\_sum\_p[i]] = 1 \AND x[B\_sum\_q[i]] = 0 \AND x[B\_to\_copy\_p[i]] = 0}\\
\OUTPUT {\tt y[B\_sum\_p[i]] $\gets$ 1}\\
Instruction count: $n_I$
\end{minimalbox}

\begin{minimalbox}
\small
\INPUT {\tt x[B\_sum\_p[i]] = 0 \AND x[B\_sum\_q[i]] = 1 \AND x[B\_to\_copy\_p[i]] = 0}\\
\OUTPUT {\tt y[B\_sum\_p[i]] $\gets$ 1}\\
Instruction count: $n_I$
\end{minimalbox}

\begin{minimalbox}
\small
\INPUT {\tt x[B\_sum\_p[i]] = 1 \AND x[B\_sum\_q[i]] = 1 \AND x[B\_to\_copy\_p[i]] = 0}\\
\OUTPUT {\tt y[B\_carry[i]] $\gets$ 1}\\
Instruction count: $n_I$
\end{minimalbox}

\begin{minimalbox}
\small
\INPUT {\tt x[B\_carry[i]] = 1}\\
\OUTPUT ($i<n_I$) {\tt y[B\_sum\_q][i+1] $\gets$ 1}\\
Instruction count: $n_I-1$
\end{minimalbox}

{\bf Instructions: copy next instruction address (B)}

Once the copying flag is activated, the contents of {\tt B\_sum\_p} are copied to {\tt B\_candidate\_0}. Simultaneously, we set {\tt B\_sum\_p[1]} to 1 to ensure that it can increment the next instruction address during the next instruction update.

\begin{minimalbox}
\small
\INPUT {\tt x[B\_sum\_p[i]] = 1 \AND x[B\_sum\_q[i]] = 0 \AND x[B\_to\_copy\_p[i]] = 1}\\
\OUTPUT ($i=1$) {\tt x[B\_candidate\_0[i]] = 1 \AND x[B\_sum\_p[i]] = 1}\\
\hspace*{37pt}($i>1$) {\tt x[B\_candidate\_0[i]] = 1}\\
Instruction count: $n_I$
\end{minimalbox}

\begin{minimalbox}
\small
\INPUT {\tt x[B\_sum\_p[i]] = 0 \AND x[B\_sum\_q[i]] = 0 \AND x[B\_to\_copy\_p[i]] = 1}\\
\OUTPUT ($i= 1$) {\tt x[B\_sum\_p[i]] = 1}\\
Instruction count: 1
\end{minimalbox}

{\bf Instructions: synchronize operations (B)}

In this operation, we synchronize the two independent phases: the subtraction and the retrieval of address $c$. Once both processes are complete, their results are simultaneously copied to the branching block. To ensure proper synchronization, we also include instructions that preserve the state of one flag if the other has not yet been activated.

\begin{minimalbox}
\small
\INPUT {\tt x[B\_to\_sync\_c] = 1 \AND x[B\_to\_sync\_sub] = 1}\\
\OUTPUT {\tt x[D\_to\_copy\_sub\_p[\ALL]] $\gets$ 1 \AND y[B\_to\_copy\_p[\ALL]] $\gets$ 1 \AND\\
\hspace*{31pt} y[B\_to\_copy\_candidate\_1[\ALL]] $\gets$ 1}\\
Instruction count: $1$
\end{minimalbox}

\begin{minimalbox}
\small
\INPUT {\tt x[B\_to\_sync\_c] = 1 \AND x[B\_to\_sync\_sub] = 0}\\
\OUTPUT {\tt y[B\_to\_sync\_c] $\gets$ 1}\\
Instruction count: $1$
\end{minimalbox}

\newpage
\begin{minimalbox}
\small
\INPUT {\tt x[B\_to\_sync\_c] = 0 \AND x[B\_to\_sync\_sub] = 1}\\
\OUTPUT {\tt y[B\_to\_sync\_sub] $\gets$ 1}\\
Instruction count: $1$
\end{minimalbox}

{\bf Instructions: copy instruction address $c$ (B)}

Once synchronization is complete, we copy the value of {\tt B\_candidate\_1} from the staging area {\tt B\_pre\_candidate\_1}. The value in the staging area is preserved until the corresponding copy flag is activated, as detailed in the instructions below.

\begin{minimalbox}
\small
\INPUT {\tt x[B\_pre\_candidate\_1[i]] = 1 \AND x[B\_to\_copy\_candidate\_1[i]] = 1}\\
\OUTPUT {\tt y[candidate\_1[i]] $\gets$ 1}\\
Instruction count: $n_I$
\end{minimalbox}

\begin{minimalbox}
\small
\INPUT {\tt x[B\_pre\_candidate\_1[i]] = 1 \AND x[B\_to\_copy\_candidate\_1[i]] = 0}\\
\OUTPUT {\tt y[B\_pre\_candidate\_1[i]] $\gets$ 1}\\
Instruction count: $n_I$
\end{minimalbox}

{\bf Instructions: update instruction address (B)}

Both candidate addresses are synchronously copied from their respective fields, along with the branching condition specified in \Cref{alg:sbn}. Based on the activated condition, one of the candidates is selected. The chosen candidate is then copied into {\tt I\_curr\_instr}, and {\tt I\_to\_compare} is activated across all bits and instructions. Additionally, as previously noted, the selected instruction is also copied to {\tt B\_sum\_q} to enable the computation of the next instruction address.

\begin{minimalbox}
\small
\INPUT {\tt x[B\_is\_condition\_1][i] = 1 \AND x[B\_is\_condition\_0][i] = 0 \AND \\ \hspace*{26pt} x[B\_candidate\_1[i]] = 1 \AND x[B\_candidate\_0[i]] = x[B\_candidate\_0[i]]}\\
\OUTPUT ($i < n_I$) {\tt y[I\_curr\_instr][\ALL][i] $\gets$ 1 \AND y[B\_sum\_q[i]] $\gets$ 1}\\
\hspace*{36pt}($i=n_I$) {\tt y[I\_curr\_instr][\ALL][i] $\gets$ 1 \AND\\ \hspace*{62pt} y[I\_to\_compare][\ALL][\ALL] $\gets$ 1 \AND y[B\_sum\_q[i]] $\gets$ 1}\\
Instruction count: $2n_I$
\end{minimalbox}

\begin{minimalbox}
\small
\INPUT {\tt x[B\_is\_condition\_1][i] = 1 \AND x[B\_is\_condition\_0][i] = 0 \AND \\ \hspace*{26pt} x[B\_candidate\_1[i]] = 0 \AND x[B\_candidate\_0[i]] = x[B\_candidate\_0[i]]}\\
\OUTPUT ($i=n_I$) {\tt y[I\_to\_compare][\ALL][\ALL] $\gets$ 1}\\
Instruction count: $2$
\end{minimalbox}

\begin{minimalbox}
\small
\INPUT {\tt x[B\_is\_condition\_1][i] = 0 \AND x[B\_is\_condition\_0][i] = 1 \AND \\ \hspace*{26pt} x[B\_candidate\_0[i]] = 1 \AND x[B\_candidate\_1[i]] = x[B\_candidate\_1[i]]}\\
\OUTPUT ($i < n_I$) {\tt y[I\_curr\_instr][\ALL][i] $\gets$ 1 \AND y[B\_sum\_q[i]] $\gets$ 1}\\
\hspace*{36pt}($i=n_I$) {\tt y[I\_curr\_instr][\ALL][i] $\gets$ 1 \AND\\ \hspace*{62pt} y[I\_to\_compare][\ALL][\ALL] $\gets$ 1 \AND y[B\_sum\_q[i]] $\gets$ 1}\\
Instruction count: $2n_I$
\end{minimalbox}

\begin{minimalbox}
\small
\INPUT {\tt x[B\_is\_condition\_1][i] = 0 \AND x[B\_is\_condition\_0][i] = 1 \AND \\ \hspace*{26pt} x[B\_candidate\_0[i]] = 0 \AND x[B\_candidate\_1[i]] = x[B\_candidate\_1[i]]}\\
\OUTPUT ($i=n_I$) {\tt y[I\_to\_compare][\ALL][\ALL] $\gets$ 1}\\
Instruction count: $2$
\end{minimalbox}

As required by the NTK learnability results, detailed in \Cref{app:learnability}, the following instructions exhibit a finite number of conflicts per coordinate. These conflicts are quantified by counting the number of outputs that write over the same coordinate.
In this implementation, the maximum number of conflicts is 4. This occurs in the {\tt M\_mem\_addr} bits, where memory values must be persisted under different combinations of control variables within the same block.
\newpage
\section{Proof of the NTK predictor behavior Theorem}
\label{app:learnability}

In this section, we give the complete proof of \Cref{thm:learnability}. For convenience, we restate both the underlying assumption and the theorem below:

\assmpt*
\thmlearnability*

\begin{proof}
    The proof begins by calculating the kernels $\Theta(\mathcal{X}, \mathcal{X}) \in \RR^{kk'\times kk'}$ and $\Theta(\hat{\vect{x}}, \mathcal{X}) \in \RR^{k \times kk'}$. Using \Cref{eq:ntk}
\begin{equation}
\Theta := \Theta(\mathcal{X}, \mathcal{X}) = \left((d-c)I_{k'} + c\vect{1}\vect{1}^\top\right) \otimes I_{k} \in \RR^{kk' \times kk'}
\label{eq:limit_NTK}
\end{equation}
where 
\begin{align}
\label{eq:c_and_d}
    d = \frac{1}{k'} \quad \textrm{ and } \quad
    c = \frac{1}{2\pi k'}
\end{align}
We can observe that since $d>c>0$, $\Theta$ is positive definite. Indeed, since $\vect{1}\vect{1}^\top$ (the matrix of all ones) is positive semidefinite, $(d-c)I_{k'} + c\vect{1}\vect{1}^\top$ is (strictly) positive definite and so the Kronecker product with $I_k$ is also positive definite (see \Cref{app:prelim}). Similarly, the test NTK is given by 
\begin{equation*}
    \Theta(\hat{\vect{x}}, \mathcal{X}) = \mathbf{f}^\top \otimes I_{k} \in \RR^{k\times kk'}
\end{equation*}
where for each $i=1,2,\dots, k'$:
\begin{align}
\label{eq:vect_f}
   \mathbf{f}_i &=  \frac{\cos \hat{\theta}_i(\pi - \hat{\theta}_i) + \sin \hat{\theta}_i}{2k'\pi}+ \hat{\vect{z}}_i\frac{\pi - \hat{\theta}_i}{2k'\pi}
\end{align}
and 
\begin{equation}
\label{eq:theta_hat}
    \hat{\theta}_i = \arccos\left(\hat{\vect{z}}_i\right)
\end{equation}
where $\vect{z}_i$ is equal to $0$ or $1/\sqrt{n_{\hat{\vect{x}}}}$ depending on whether $\hat{\vect{x}}$ matches the $i$-th training example. Since $\vect{z}$ takes only two values, the resulting $\hat{\theta}_i$ takes two values $\hat{\theta}^0$ and $\hat{\theta}^1$ (corresponding to $\hat{\vect{x}}_i = 0$ and $\hat{\vect{x}}_i=1/\sqrt{n_{\hat{\vect{x}}}}$) and subsequently each $\mathbf{f}_i$ also takes two values $\text{f}^0$ and $\text{f}^1$. Finally, an application of \Cref{thm:sherman_mor} gives 
\begin{equation*}
    \Theta^{-1} = \left(\frac{1}{d-c} I_{k'} - \frac{c}{(d-c)(d-c+ck')}\vect{1}\vect{1}^\top\right) \otimes I_k
\end{equation*}
Substituting everything in $\Theta^{-1}\mathbf{f}$, we find that this takes the two values $w^0(\hat{\vect{x}})$ and $w^1(\hat{\vect{x}})$ as discussed in the main paper, namely:
\begin{equation*}
    (\Theta^{-1}\mathbf{f})_i = w^0(\hat{\vect{x}}) = \frac{y^2-\sqrt{y^2-1} y-2 \pi  (y-1)+2 y \sec ^{-1}(y)-1}{(2 \pi -1) (k+2 \pi -1)}
\end{equation*}
if $\hat{\vect{x}}$ does not match the $i$-th training sample and 
\begin{align*}
    (\Theta^{-1}\mathbf{f})_i = w^1(\hat{\vect{x}})&= 
    -\frac{\left(\sqrt{y^2-1}-y\right) \left(y^2-k\right)}{(2 \pi -1) (k+2 \pi -1) y} +  \frac{\pi \left(k-y^2+2 \sqrt{y^2-1}-1\right)}{(2 \pi -1) (k+2 \pi -1) y} \nonumber\\&\quad\;\; + \frac{2\left(k-y^2+2 \pi -1\right) \csc ^{-1}(y)-\sqrt{y^2-1}+2 \pi ^2}{(2 \pi -1) (k+2 \pi -1) y}
\end{align*}
if $\hat{\vect{x}}$ matches the $i$-th training sample, where $y = \sqrt{n_{\hat{\vect{x}}}}$. Using a symbolic calculation system (Mathematica \citet{Mathematica}) we can establish that for all possible values of $y$ (i.e $y=\sqrt{m}$ for $m=1,2,\dots, b)$, $w^0(\hat{\vect{x}}) \leq 0$ and $w^1(\hat{\vect{x}}) > 0$. The rest of the proof is exactly as given in the main paper.
\end{proof}

\paragraph{Satisfying \Cref{assumpt:unwanted_corr}} To verify that \Cref{assumpt:unwanted_corr} holds for a particular application we need to bound the number of unwanted correlations for each possible test input $\vect{\hat{x}}$. An easier way to verify that \Cref{assumpt:unwanted_corr} holds is by studying the number of conflicts, as defined in \Cref{app:algorithms}. Note that if the maximum number of conflicts is $c$, at most 
$c+1$ training examples can have a ground truth label of $1$ at the same position. In particular, this shows that the number of unwanted correlations is at most $c$ for any test input $\hat{\vect{x}}$. In what follows we show how we leverage this observation to show that \Cref{assumpt:unwanted_corr} is satisfied for all tasks discussed in the main paper:

\begin{itemize}
    \item For the case of \emph{binary permutations} of length $\ell$, we have $\ell$ template configuration and the input dimension is $k'=\ell$. Since we have no unwanted correlations, \Cref{assumpt:unwanted_corr} is trivially satisfied.
    \item For the case of \emph{binary addition} of two $\ell$-bit numbers, we have $2\ell$ template configurations and the input dimension is $k'=4\ell$. We find symbolically that the ratio $-w^1(\hat{\vect{x}})/w^0(\hat{\vect{x}})$ is decreasing as a function of $n_{\hat{\vect{x}}}$ (for fixed $\ell$) and strictly greater than $1$ for all $\ell \geq 1$ and $n_{\hat{\vect{x}}} \leq 2\ell$. Since the maximum number of conflicts is $1$, \Cref{assumpt:unwanted_corr} is satisfied.
    \item For the case of \emph{binary multiplication} of two $\ell$-bit numbers, we have $11\ell$ template configurations and the input dimension is $k'=20\ell+2$. By adding extra training examples with corresponding ground truth labels of $\vect{0}$ that are never matched, we can augment the number of training examples to $k'=21\ell$. We find symbolically that the ratio $-w^1(\hat{\vect{x}})/w^0(\hat{\vect{x}})$ is decreasing as a function of $n_{\hat{\vect{x}}}$ (for fixed $\ell$) and strictly greater than $2$ for all $\ell \geq 1$ and $n_{\hat{\vect{x}}} \leq 7\ell+1$.\footnote{The maximum number of training examples that any test input can match is $7\ell + 1$. Refer to \Cref{app:algorithms} for details.} Since the maximum number of conflicts is $2$, \Cref{assumpt:unwanted_corr} is satisfied.
    \item For the case of \emph{SBN} with memory size $\ell_M=\ell$, we have $b=5\ell(\log_2 \ell + c_1) + c_2\log \ell  + c_3$ template configurations and the input dimension is $k' = 13\ell (\log_2 \ell + c_4) + c_5\log_2 \ell + c_6$, where $c_1, \dots, c_6$ are positive constants based on other configurations of the algorithm being executed. In particular, notice that $b \leq C_1\ell^2$ for some $C_1>0$. By a similar augmentation as before, we can achieve a dataset size of $k'=5C_1\ell^2$. Again, we find symbolically that the ratio $-w^1(\hat{\vect{x}})/w^0(\hat{\vect{x}})$ is decreasing as a function of $n_{\hat{\vect{x}}}$ (for fixed $\ell$) and strictly greater than $5C_1/C_1-1=4$ for all $\ell \geq 1$ and $n_{\hat{\vect{x}}} \leq C_1\ell^2$ (and in particular for $n_{\hat{\vect{x}}}\leq b$). Since there are at most 4 unwanted correlations, \Cref{assumpt:unwanted_corr} is satisfied. The previous argument can be repeated with $1+\varepsilon$ for any $\varepsilon > 0$ in place of $2$ at the exponent of $\ell$ and a different constant $C_\varepsilon > 0$.

\end{itemize}

The above is enough to conclude \Cref{rem:applications}.

\section{Proof of the lemma for the order of the mean predictor and its variance}
\label{app:proof_orders}

In this section, we provide the proof of \Cref{lemm:orders}. We do so by analyzing the mean and variance of the NTK predictor, expressing them as functions of the test vector $\hat{\vect{x}}$. Specifically, we characterize their dependence on the input length $k'$ and the number of matching blocks, $n_{\hat{\vect{x}}}$.
We establish that the variance of the predictions scales as $\mathcal{O}\left(1/k'\right)$, while the mean coordinates exhibit different behaviors depending on whether the corresponding ground-truth bit is set and the relation between $\hat{\vect{x}}$ and $k'$. We begin by calculating the variance: a direct substitution on the formula for the variance $\Sigma(\hat{\vect{x}})$ of  \Cref{eq:var_out} we find that the variance for a test input $\hat{\vect{x}}$ is given by 
\begin{equation*}
    \Sigma(\hat{\vect{x}}) = \left(\frac{d}{2}+\mathbf{f}^\top AMA \mathbf{f} - 2\mathbf{f}^\top A \mathbf{g}\right)I_k
\end{equation*}
where \\
\begin{minipage}{0.44\textwidth}
\begin{align*}
    M &= \left(\frac{d}{2}-c\right)I_{k'} + c \vect{1}\vect{1}^\top \label{eq:M_matrix} 
\end{align*}
\end{minipage}
\hfill
\begin{minipage}{0.54\textwidth}
\begin{align*}
    A &= \frac{1}{d-c} I_{k'} - \frac{c}{(d-c)(d-c+ck')}\vect{1}\vect{1}^\top %
\end{align*}
\end{minipage}
\\ \\
and for each $i=1,2,\dots,k'$:
\begin{equation*}
\mathbf{g}_i = \frac{\cos \hat{\theta}_i(\pi - \hat{\theta}_i) + \sin \hat{\theta}_i}{2k'\pi}
\label{eq:vect_g}
\end{equation*}

The scalars $c$ and $d$, the vector $\mathbf{f}$, and the angles $\hat{\theta}_i$ are as defined in \Cref{eq:c_and_d}, \Cref{eq:vect_f} and \Cref{eq:theta_hat}, respectively. This shows that the output coordinates are independent Gaussian random variables with the same covariance. Notice that whenever the test input is part of the training set, as expected, the variance vanishes.

\paragraph{Preliminary quantities}
To discharge notation and facilitate the subsequent proofs, we rewrite some of the already defined quantities and introduce some auxiliary quantities. Recall that: 
\begin{equation*}
    d=\frac{1}{k'}\quad \textrm{ and } \quad c=\frac{1}{2\pi k'}
\end{equation*}
We further introduce 
\begin{equation*}
    d^\prime = \frac{d}{2} = \frac{1}{2k'} \quad \textrm{ and }\quad c^\prime = c= \frac{1}{2\pi k'}
\end{equation*}
For a test vector $\hat{\vect{x}}$, we introduce the binary indicator $\mathbb{I}_i(\hat{\vect{x}})$ that indicates whether $\hat{\vect{x}}$ matches the $i$-th training example. With that, we rewrite:
\begin{equation*}
        \hat{\theta}_i = \begin{cases}
        \arccos\left(\frac{1}{\sqrt{n_{\hat{\vect{x}}}}}\right) &\textrm{if } \mathbb{I}_i(\hat{\vect{x}}) = 1\\
        \frac{\pi}{2} &\textrm{otherwise}
    \end{cases}
\end{equation*}

From this, we directly obtain the cosine and sine of $\hat{\theta}_i$:
\begin{equation*}
    \cos\hat{\theta}_i = \begin{cases}
        \frac{1}{\sqrt{n_{\hat{\vect{x}}}}} &\textrm{if } \mathbb{I}_i(\hat{\vect{x}}) = 1\\
        0 &\textrm{otherwise}
    \end{cases}
    \qquad 
    \sin\hat{\theta}_i = \begin{cases}
        \sqrt{1-\frac{1}{n_{\hat{\vect{x}}}}} &\textrm{if } \mathbb{I}_i(\hat{\vect{x}}) = 1\\
        1 &\textrm{otherwise}
    \end{cases}
\end{equation*}

Using these quantities, we rewrite:
\begin{equation*}
    \mathbf{g}_i = \begin{cases}
        \frac{1}{2\pi k\sqrt{n_{\hat{\vect{x}}}}}\left(\pi-\arccos\left(\nicefrac{1}{\sqrt{n_{\hat{\vect{x}}}}}\right) + \sqrt{n_{\hat{\vect{x}}}-1}\right) &\textrm{if } \mathbb{I}_i(\hat{\vect{x}}) = 1\\
        \frac{1}{2\pi k} &\textrm{otherwise}
    \end{cases}
\end{equation*}
and we get that \Cref{eq:vect_f} is equal to
\begin{equation*}
    \mathbf{f}_i = \begin{cases}
        \mathbf{g}_i + \frac{\pi-\arccos\left(\nicefrac{1}{\sqrt{n_{\hat{\vect{x}}}}}\right)}{2\pi k \sqrt{n_{\hat{\vect{x}}}}} &\textrm{if } \mathbb{I}_i(\hat{\vect{x}}) = 1\\
        \mathbf{g}_i &\textrm{otherwise}
    \end{cases}
\end{equation*}

\subsection{Computing the order of the variance}
We begin by recalling the expression for the variance (substituting $d^\prime$ and $c^\prime$):
\begin{equation*}
    \Sigma(\hat{\vect{x}}) = \left(d^\prime+\mathbf{f}^\top AMA \mathbf{f} - 2\mathbf{f}^\top A \mathbf{g}\right)I_k
\end{equation*}
where
\begin{equation*}
    M = \left(d^\prime-c^\prime \right)I_{k'} + c^\prime
    \vect{1}\vect{1}^\top \quad \textrm{ and } \quad A = \frac{1}{d-c} I_{k'} - \frac{c}{(d-c)(d-c+ck')}\vect{1}\vect{1}^\top
\end{equation*}
We aim to show that the variance is bounded by $\mathcal{O}\left(1/k'\right)$. To facilitate this, we rewrite $\Sigma(\hat{\vect{x}})$ by expanding the matrix multiplication:
\begin{align*}
    \Sigma(\hat{\vect{x}}) &= d^\prime I_k+\mathbf{f}^\top \left(AMA \mathbf{f} - 2A (\mathbf{f}-\mathbf{z})\right)I_k\\
    &= d^\prime I_k+\mathbf{f}^\top \left(AMA \mathbf{f} - 2A\mathbf{f}\right)I_k + 2\mathbf{f}^\top A\mathbf{z}I_k
\end{align*}
where we define $\mathbf{z} = \mathbf{f} - \mathbf{g}$ for notational simplicity. It is straightforward to observe that the first term, $d^\prime I_k$, is bounded by $\mathcal{O}\left(1/k'\right)$. Thus, we focus our attention on the remaining two terms. For the second term, we begin by showing that $AMA-2A$ has a maximum eigenvalue of $\mathcal{O}(1)$, and therefore:
$$\mathbf{f}^\top(AMA-2A)\mathbf{f}\in\mathcal{O}\left(1/k'\right)$$
We begin by expressing $AMA$ in a more manageable form. A direct computation reveals:
$$AMA = uI_{k'} - v\vect{1}\vect{1}^\top$$
where 
\begin{equation*}
 u = \frac{d^\prime - c^\prime}{\left(d-c\right)^2} \quad\textrm{ and }\quad v = \frac{1}{k'}\left(\frac{d^\prime -c^\prime + k'c^\prime}{\left(d -c + k'c\right)^2}- \frac{d^\prime-c^\prime}{\left(d-c\right)^2}\right)
\end{equation*}
Subtracting $2A$ from this expression gives:
$$AMA-2A = \left(u-2a\right)I_{k'} + \left(v+2b\right)\vect{1}\vect{1}^\top$$
where
\begin{equation*}
 a = \frac{d - c}{\left(d-c\right)^2} \quad\textrm{ and }\quad b = \frac{c}{\left(d-c+ck'\right)\left(d-c\right)}
\end{equation*}

This matrix has two unique eigenvalues: $u-2a$ (with multiplicity $k'-1$) and $u-2a + (v+2b)k'$. We will show that the largest eigenvalue is $\mathcal{O}(1)$. To this end, we evaluate these two quantities, starting with $u - 2a$:
\begin{align*}
    u-2a &= \frac{d^\prime - c^\prime -2(d-c)}{(d-c)^2}\\
    &=-\frac{2\pi(3\pi-1)k'}{(2\pi-1)^2}
\end{align*}
which is negative. For the second eigenvalue, we find:
\begin{align*}
    u-2a + (v+2b)k' &= -\frac{2\pi(3\pi-1)k'}{(2\pi-1)^2} + \frac{2\pi k'\left(\pi+k'-1\right)}{\left(2\pi+ k'-1\right)^2}-\frac{2\pi k'(\pi-1)}{\left(2\pi-1\right)^2} \\&+ \frac{4\pi k'}{\left(2\pi-1\right)\left(2\pi+k'-1\right)}
    = \left(\frac{4\pi}{2\pi+k'-1} + \frac{\pi+k'-1}{(2\pi+k'-1)^2}\right)k'
\end{align*}
which is positive and clearly $\mathcal{O}(1)$. With this result, we can bound the multiplication by the norms of its components. Since $\lambda_{\max}(AMA - 2A) \in \mathcal{O}(1)$ and $\|\mathbf{f}\|^2 \in \mathcal{O}\left(1/k'\right)$ (since each $\mathbf{f}_i \in \mathcal{O}\left(1/k'\right))$, we conclude:
$$
\mathbf{f}^\top (AMA - 2A) \mathbf{f} \in \mathcal{O}\left( 1/k' \right)
$$
We now turn to the third term, $2\mathbf{f}^\top A \mathbf{z}$. Expressing $\mathbf{z}$ component-wise we have:
\begin{equation*}
    \mathbf{z}_i = \begin{cases}
        \frac{\pi-\arccos\left(\nicefrac{1}{\sqrt{n_{\hat{\vect{x}}}}}\right)}{2\pi k' \sqrt{n_{\hat{\vect{x}}}}} &\textrm{if } \mathbb{I}_i(\hat{\vect{x}}) \neq 0\\
        0 &\textrm{otherwise}
    \end{cases}
\end{equation*}

We then decompose the product $\mathbf{f}^\top A\mathbf{z}$ as:
\begin{equation}
    \label{eq:fTAz}
    \mathbf{f}^\top A\mathbf{z} = a\mathbf{f}^\top\mathbf{z} - b(\mathbf{f}^\top\mathbf{1})(\mathbf{1}^\top\mathbf{z})
\end{equation}
where
\begin{equation*}
    a = \frac{1}{d-c} \quad \textrm{ and } \quad b = \frac{2\pi k'}{(2\pi-1)(2\pi+k'-1)}
\end{equation*}
The first term $a\mathbf{f}^\top\mathbf{z}$ can be expressed as:
\begin{equation*}
    a\mathbf{f}^\top\mathbf{z} = \frac{1}{(2\pi -1)(2\pi k')^2}\left(2\pi - 2\arccos\left(\nicefrac{1}{\sqrt{n_{\hat{\vect{x}}}}}\right)- \sqrt{n_{\hat{\vect{x}}}-1}\right)\left(\pi - 2\arccos\left(\nicefrac{1}{\sqrt{n_{\hat{\vect{x}}}}}\right)\right)
\end{equation*}
which is positive and $\mathcal{O}\left(1/k'\right)$. For the second term $b(\mathbf{f}^\top\vect{1})(\vect{1}^\top\mathbf{z})$, we start by expressing the  individual quantities $\mathbf{f}^\top\mathbf{1}$ and $\mathbf{1}^\top\mathbf{z}$:
\begin{equation*}
    \mathbf{f}^\top\mathbf{1} = \frac{\sqrt{n_{\hat{\vect{x}}}}}{\pi k'}\left(\pi - \arccos\left(\nicefrac{1}{\sqrt{n_{\hat{\vect{x}}}}}\right)\right) + \frac{\sqrt{n_{\hat{\vect{x}}}}\sqrt{n_{\hat{\vect{x}}}-1}}{2\pi k'} + \frac{k'-n_{\hat{\vect{x}}}}{2\pi k'}
\end{equation*}
and 
\begin{equation*}
    \mathbf{1}^\top\mathbf{z} = \frac{\sqrt{n_{\hat{\vect{x}}}}}{2\pi k'}\left(\pi - \arccos\left(\nicefrac{1}{\sqrt{n_{\hat{\vect{x}}}}}\right)\right)
\end{equation*}
Therefore, the entire term can be expressed as:
\begin{align*}
    b(\mathbf{f}^\top\vect{1})(\vect{1}^\top\mathbf{z}) = \frac{1}{2\pi k'(2\pi-1)(2\pi-1+k')}\Big(& 2n_{\hat{\vect{x}}}\left(\pi - \arccos\left(\nicefrac{1}{\sqrt{n_{\hat{\vect{x}}}}}\right)\right)^2\\&+n_{\hat{\vect{x}}}\sqrt{n_{\hat{\vect{x}}}-1}\left(\pi - \arccos\left(\nicefrac{1}{\sqrt{n_{\hat{\vect{x}}}}}\right)\right)\\
    &+ (k'-n_{\hat{\vect{x}}})\sqrt{n_{\hat{\vect{x}}}}\left(\pi - \arccos\left(\nicefrac{1}{\sqrt{n_{\hat{\vect{x}}}}}\right)\right)\Big)
\end{align*}
which is positive and $\mathcal{O}\left(1/k'\right)$, therefore, the subtraction of \Cref{eq:fTAz} (which is equal to $\mathbf{f}^\top A\mathbf{z}$) is $\mathcal{O}\left(1/k'\right)$. Combining the bounds on all three terms, we conclude that:
$$
\Sigma(\hat{\vect{x}}) \in \mathcal{O}\left( 1/k' \right)I_k
$$
completing the proof of the first part of \Cref{lemm:orders}.

\subsection{Computing the order of the mean}
Recall from \Cref{eq:prediction} that each coordinate of the NTK predictor mean is given as a weighted sum of $w^1 \equiv w^1(\hat{\vect{x}})$ and $w^0 \equiv w^0(\hat{\vect{x}})$, where $w^1 = (\Theta^{-1}\mathbf{f})_i$ for all coordinates $i \in [k]$ such that $\hat{\vect{x}}$ matches the $i$-th training example, and $w^0( = (\Theta^{-1}\mathbf{f})_i$ for all coordinates $i \in [k]$ such that $\hat{\vect{x}}_i$ does not match the $i$-th training example. In particular, the template-matching mechanism of \Cref{eq:computation} shows that whenever the $i$-th ground-truth bit of $\hat{\vect{x}}$, $f(\hat{\vect{x}})_i$, is set, the NTK predictor satisfies: 
\begin{equation}
\label{eq:pred_1}
 \mu(\hat{\vect{x}})_i = w^1 + |\mathcal{I}^1_-(\vect{\hat{x})}|\cdot w^0
\end{equation}
where $\mathcal{I}^1_{-}(\hat{\vect{x}})$ denotes the indices of training examples that do not match $\hat{\vect{x}}$ and have their $i$-th ground-truth output bit set. Similarly, whenever  $f(\hat{\vect{x}})_i = 0$, the NTK predictor satisfies:
\begin{equation}
\label{eq:pred_0}
\mu(\hat{\vect{x}})_i = |\mathcal{I}^1_-(\vect{\hat{x})}|\cdot w^0
\end{equation}
Since $|\mathcal{I}^1_-(\vect{\hat{x})}|$ is bounded by the maximum number of conflicts (as defined in \Cref{app:algorithms}) which is constant for all four tasks (i.e. it doesn't scale with $k'$), the asymptotic order of $\mu(\hat{\vect{x}})$ is determined solely by the asymptotic orders of $w^0$ and $w^1$.  We can thus turn our attention to
\begin{equation*}
    \Theta^{-1}\mathbf{f} = \left(\frac{1}{d-c} I_{k'} - \frac{c}{(d-c)(d-c+ck')}\vect{1}\vect{1}^\top\right)\mathbf{f}
\end{equation*}
which we decompose into two terms:
\begin{equation}
    \label{eq:mu_decomposed}
    \Theta^{-1}\mathbf{f} = \frac{\mathbf{f}}{d-c}  - \frac{c\vect{1}\vect{1}^\top\mathbf{f}}{(d-c)(d-c+ck)}
\end{equation}
For the first term in \Cref{eq:mu_decomposed}, we obtain for each $i =1,2,\dots,k'$:
\begin{equation*}
        \frac{\mathbf{f}_i}{d-c} = \begin{cases}
        \frac{1}{2\pi-1}\left(
        \frac{\pi-\arccos\left(\nicefrac{1}{\sqrt{n_{\hat{\vect{x}}}}}\right)}{\sqrt{n_{\hat{\vect{x}}}}}
        +\sqrt{1-\frac{1}{\sqrt{n_{\hat{\vect{x}}}}}}\right)
         &\textrm{if } \mathbb{I}_i(\hat{\vect{x}}) = 1\\
        \frac{1}{2\pi-1} &\textrm{otherwise}
    \end{cases}
\end{equation*}
The second term is a constant vector with coefficient:
\begin{align*}
    \frac{c\vect{1}\vect{1}^\top\mathbf{f}}{(d-c)(d-c+ck')} = \frac{1}{(2\pi-1)(2\pi-1+k')}\Bigg(&\frac{n_{\hat{\vect{x}}}\left(\pi-\arccos\left(\nicefrac{1}{\sqrt{n_{\hat{\vect{x}}}}}\right)\right)}{2 \sqrt{n_{\hat{\vect{x}}}}}\\&\quad+n_{\hat{\vect{x}}}\sqrt{1-\frac{1}{n_{\hat{\vect{x}}}}}+k'-n_{\hat{\vect{x}}}\Bigg)
\end{align*}

We now evaluate each case separately.

\textbf{Case 1:} For $w^0(\hat{\vect{x}})$, that is, when $\mathbb{I}_i(\hat{\vect{x}}) = 0$, we have:
\begin{align}
\label{eq:mu_0}
    w^0(\hat{\vect{x}}) = &\frac{1}{2\pi-1}\left(
    1 - \frac{k-n_{\hat{\vect{x}}}}{2\pi + k' -1} - \frac{2\pi k'n_{\hat{\vect{x}}}}{\left(2\pi+k'-1\right)\pi k'\sqrt{n_{\hat{\vect{x}}}}}\left(\pi-\arccos\left(\nicefrac{1}{\sqrt{n_{\hat{\vect{x}}}}}\right)\right)\right)\nonumber\\
    -&\frac{1}{2\pi-1}\left(\frac{2\pi k'n_{\hat{\vect{x}}}}{\left(2\pi + k' -1\right)2\pi k'}\sqrt{1-\frac{1}{\sqrt{n_{\hat{\vect{x}}}}}}\right)
\end{align}

The absolute value of \Cref{eq:mu_0} behaves like $\Theta\left(\sqrt{n_{\hat{\vect{x}}}}/k'\right)$, and setting $n_{\hat{\vect{x}}}$ to different regimes yields the bounds:
\begin{equation*}
     |w^0(\hat{\vect{x}})| \in 
        \begin{cases}
            \Theta\left(1/k'\right) & \text{if}\; n_{\hat{\vect{x}}} \text{ is constant} \\
            \Theta\left(\sqrt{n_{\hat{\vect{x}}}}/k'\right) &\text{if}\; n_{\hat{\vect{x}}} \text{ is non-constant}\\ &\text{and sublinear in } k' \\
            \Theta\left(1/\sqrt{k'}\right) & \text{if}\; n_{\hat{\vect{x}}} = ck' \text{ for } c \in (0, 1]
        \end{cases}
\end{equation*}

\textbf{Case 2:} For $w^1(\hat{\vect{x}})$, that is, when $\mathbb{I}_i(\hat{\vect{x}}) = 1$, we have:
\begin{align}
\label{eq:mu_1}
    w^1(\hat{\vect{x}}) =& \frac{2\pi-1}{2\pi+k'-1}\left(\frac{2}{\left(2\pi-1\right)\sqrt{n_{\hat{\vect{x}}}}}\left(\pi-\arccos\left(\nicefrac{1}{\sqrt{n_{\hat{\vect{x}}}}}\right)\right)+\frac{1}{2\pi-1}\sqrt{1-\frac{1}{\sqrt{n_{\hat{\vect{x}}}}}}\right)\nonumber\\ &+ \frac{k'-n_{\hat{\vect{x}}}}{2\pi+k'-1}\Bigg(\frac{2}{\left(2\pi-1\right)\sqrt{n_{\hat{\vect{x}}}}}\left(\pi-\arccos\left(\nicefrac{1}{\sqrt{n_{\hat{\vect{x}}}}}\right)\right)\nonumber\\&\qquad\qquad\qquad\qquad +\frac{1}{2\pi-1}\sqrt{1-\frac{1}{\sqrt{n_{\hat{\vect{x}}}}}}-\frac{1}{2\pi-1}\Bigg)
\end{align}

When setting $n_{\hat{\vect{x}}}=k'$, we note that the second term becomes zero and the first term becomes $\Theta\left(1/k'\right)$.
Alternatively, the first term in \Cref{eq:mu_1} is $\Theta\left(1/k'\right)$ and the second term is $\Theta\left(1/\sqrt{n_{\hat{\vect{x}}}}\right)$, implying $\mu(\hat{\vect{x}})_1 \in \Theta\left(1/\sqrt{n_{\hat{\vect{x}}}}\right)$. 
Setting $n_{\hat{\vect{x}}}$ to different regimes yields the bounds established in \Cref{lemm:orders}, namely:
\begin{equation*}
|w^1(\hat{\vect{x}})| \in \begin{cases}
            \Theta\left(1/\sqrt{n_{\hat{\vect{x}}}}\right) & \text{if}\; n_{\hat{\vect{x}}} \text{ is non-constant}\\ &\text{and sublinear in } k',\\
            \Theta\left(1/\sqrt{k'}\right) & \text{if}\; n_{\hat{\vect{x}}} = ck' \text{ for } c\in(0,1) \\ 
            \Theta\left(1/k'\right) & \text{if}\; n_{\hat{\vect{x}}}=k'
        \end{cases}
\end{equation*}

Substituting the derived orders in \Cref{eq:pred_0} and \Cref{eq:pred_1} yields the orders of \Cref{lemm:orders}, completing the proof of the second part of \Cref{lemm:orders}.

\begin{remark}
The conclusion of \Cref{lemm:orders} holds for any task such that the cardinality of $\mathcal{I}^1_-(\vect{\hat{x})}$ does not scale with $k'$ for any test input $\hat{\vect{x}}$. In particular, it holds for tasks such that the maximum number of conflicts is bounded by a constant that does not scale with the number of bits. To interpret the last condition visually, consider $\mathcal{Y}$ as in \Cref{fig:encoding}. We require each column of $\mathcal{Y}$ to have a sum bounded by a constant which does not scale with the number of bits (and hence the input dimension $k'$). For example, that constant for addition is equal to $2$.
\end{remark}
\section{Training Experiments on Permutation}
\label{app:experiments}

We present two experiments that empirically validate our theoretical findings for the algorithmic task of permutation. We chose this task because it requires significantly fewer models to achieve reasonable results compared to more complex tasks such as binary addition and multiplication, which exhibit substantially higher ensemble complexity. Addressing those tasks would demand large-scale computational infrastructure, which is currently beyond our available resources. Nonetheless, our present results still offer strong empirical support for our theoretical conclusions.

For this task, we train a two-layer fully connected feed-forward network with a hidden layer of width 50,000 only using standard basis vectors and optimized using full-batch gradient descent. Our goal is to learn some random permutation on normalized binary inputs of length $k$. For testing, we evaluate the performance on 1000 random vectors with $n_{\hat{\vect{x}}}=2$ nonzero entries. To match the assumptions of our theoretical analysis, we initialize the weights exactly as given in \Cref{sec:prelim}. 

\textbf{1. Number of models to achieve $90\%$ accuracy.}
Our first experiment examines how many independently trained models need to be averaged to achieve a testing accuracy of 90\% as a function of the input length $k$.

\textbf{2. Accuracy vs ensemble size.}
Our second experiment fixes $k\in\{5, 10, 15, 20, 25, 30\}$ and examines how the post-rounding accuracy varies as the ensemble size increases.

\begin{figure}[ht]
    \centering
    \includegraphics[width=0.9\linewidth]{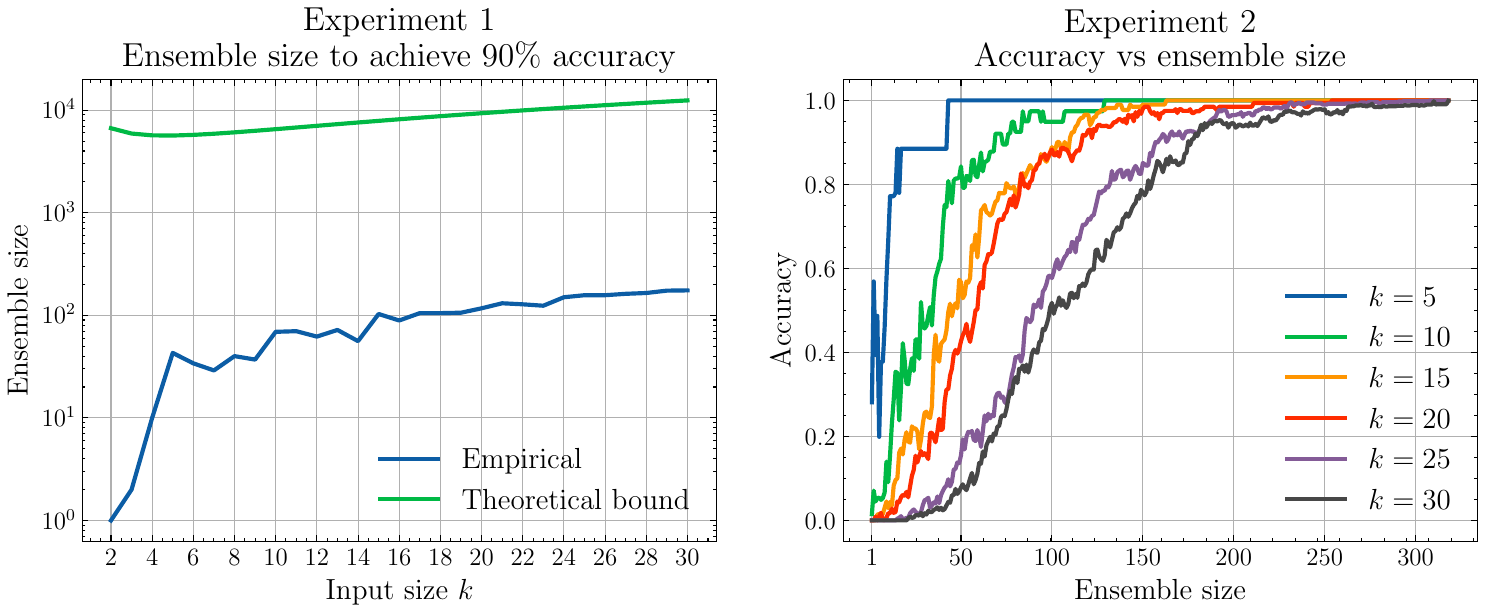}
    \caption{Experiment 1 (left) presents the required number of models (ensemble size) to achieve 90\% accuracy as a function of input size $k$, compared to the theoretical bound from \Cref{eq:ensemble_uniform} with the appropriate $\delta$. Experiment 2 (right) presents accuracy as a function of ensemble size for different input sizes $k$.}
    \label{fig:experiments}
\end{figure}

In \Cref{fig:experiments}, the left plot presents the results of Experiment 1 compared to the bound established in \Cref{eq:ensemble_uniform} for the appropriate choice of $\delta$ to guarantee $90\%$ accuracy for all test inputs simultaneously. We observe that the empirical ensemble size remains below the theoretical bound, and that both curves follow a similar pattern as a function of $k$. For Experiment 2, the results on the right plot of \Cref{fig:experiments} illustrate the convergence to perfect accuracy as a function of the ensemble size for different input sizes $k$. As shown, larger input sizes require more models to achieve perfect accuracy, but with a sufficient number of models, all instances eventually reach perfect accuracy.

\end{document}